\title{Online Influence Maximization under Independent Cascade Model with Semi-Bandit Feedback}
\author{
\hspace{0.7cm}  Zheng Wen\\ 
\hspace{0.7cm} Adobe Research\\
\hspace{0.7cm} \texttt{zwen@adobe.com} \\
\And
\hspace{1.7cm}Branislav Kveton\\
\hspace{1.7cm}Adobe Research\\
\hspace{1.7cm}\texttt{kveton@adobe.com}
\AND
  Michal Valko \\
SequeL team, INRIA Lille - Nord Europe\\
\texttt{michal.valko@inria.fr}
 \And
  Sharan Vaswani \\
University of British Columbia \\
\texttt{sharanv@cs.ubc.ca}
}
\newcommand{\bbP}{\mathbb{P}}
\newcommand{\imb}{\tt{IMLinUCB}}
\newcommand{\oracle}{{\tt{ORACLE}}}
\newcommand{\cucb}{{\tt{CUCB}}}
\newcommand{\rnd}[1]{\mathsf{#1}}
\newtheorem{lemma}{Lemma}
\newtheorem{definition}{Definition}
\newtheorem{theorem}{Theorem}
\newtheorem{remark}{Remark}
\newenvironment{proof}{\paragraph{Proof:}}{\hfill$\square$}
\newcommand{\metric}{maximum observed relevance\xspace} 
\newcommand{\CommaBin}{\mathbin{\raisebox{0.5ex}{,}}}
\newcommand{\set}[1]{\left\{#1\right\}}
\newcommand{\E}{\mathbb{E}}
\newcommand{\abs}[1]{\left|#1\right|}
\newcommand{\transpose}{^\mathsf{\scriptscriptstyle T}}
\newcommand{\cE}{\mathcal{E}}
\newcommand{\cG}{\mathcal{G}}
\newcommand{\cH}{\mathcal{H}}
\newcommand{\cO}{\mathcal{O}}
\newcommand{\cS}{\mathcal{S}}
\newcommand{\cV}{\mathcal{V}}
\newcommand{\bI}{{\bf I}}
\newcommand{\bM}{{\bf M}}
\newcommand{\bV}{{\bf V}}
\newcommand{\bw}{{\bf w}}
\newcommand{\bX}{{\bf X}}
\renewcommand{\epsilon}{\varepsilon}
\renewcommand{\tilde}{\widetilde}
\renewcommand{\bar}{\overline}
\newcommand{\nothere}[1]{}
\newcommand{\UCB}{\texttt{UCB}\xspace}
\definecolor{babyblue}{rgb}{0.54, 0.81, 0.94}
\definecolor{citrine}{rgb}{0.89, 0.82, 0.04}
\definecolor{misocolor}{rgb}{0.16,0.27,0.86}
\begin{document}

\maketitle

\begin{abstract}
We study the online influence maximization problem in social networks under the \emph{independent cascade} model. Specifically, we aim to learn the set of ``best influencers'' in a social network online while repeatedly interacting with it. We address the challenges of (i) combinatorial action space, since the number of feasible influencer sets grows exponentially with the maximum number of influencers, and (ii) limited feedback, since only the influenced portion of the network is observed. Under a stochastic semi-bandit feedback, we propose and analyze $\imb$, a computationally efficient {\tt UCB}-based algorithm. Our bounds on the cumulative regret are polynomial in all quantities of interest, achieve near-optimal dependence on the number of interactions and reflect the \emph{topology} of the network and the \emph{activation probabilities} of its edges, thereby giving insights on the problem complexity. To the best of our knowledge, these are the first such results. Our experiments show that in several representative graph topologies, the regret of $\imb$ scales as suggested by our upper bounds. $\imb$ permits linear generalization and thus is both statistically and computationally suitable for large-scale problems. Our experiments also show that $\imb$ with linear generalization can lead to low regret in real-world online influence maximization.
\end{abstract}

\setstretch{0.99}

\section{Introduction}
\label{sec:introduction}
\newcommand{\etal}{\emph{et al.}}
Social networks are increasingly important as media for spreading information, ideas, and influence. Computational advertising studies models of information propagation or diffusion in such networks~\citep{kempe2003maximizing, chen2010scalable,easley2010networks}. \emph{Viral marketing} aims to use this information propagation to spread awareness about a specific product. More precisely, agents (marketers) aim to select a fixed number of influencers (called \emph{seeds} or \emph{source nodes}) and provide them with free products or discounts. They expect that these users will influence their neighbours and, transitively, other users in the social network to adopt the product. This will thus result in information propagating across the network as more users adopt or become aware of the product. The marketer has a budget on the number of free products and must choose seeds in order to maximize the \emph{influence spread}, which is the expected number of users that become aware of the product. This problem is referred to as \emph{influence maximization} (IM)~\cite{kempe2003maximizing}. 

For IM, the social network is modeled as a directed graph with the nodes representing users, and the edges representing relations (e.g., friendships on Facebook, following on Twitter) between them. Each directed edge $(i, j)$ is associated with an \emph{activation probability} $\bar{w}(i, j)$ that models the strength of influence that user $i$ has on user $j$. We say a node $j$ is a \emph{downstream neighbor} of node $i$ if there is a directed edge $(i,j)$ from $i$ to $j $. The IM problem has been studied under a number of diffusion models~\cite{kempe2003maximizing,gomez2012influence,li2013influence}. The best known and studied are the models in~\cite{kempe2003maximizing}, and in particular the \emph{independent cascade} (IC) model. In this work, we assume that the diffusion follows the IC model and describe it next. 

After the agent chooses a set of source nodes $\cS$, the independent cascade model defines a diffusion (influence) process: At the beginning, all nodes in $\cS$ are activated (influenced); subsequently, every activated node~$i$ can activate its downstream neighbor $j$ with probability $\bar{w}(i, j)$ once, \emph{independently} of the history of the process. This process runs until no activations are possible. In the IM problem, the goal of the agent is to \emph{maximize the expected number of the influenced nodes} subject to a cardinality constraint on $\cS$. Finding the best set $\cS$ is an  NP-hard problem, but under common diffusion models including IC, it can be efficiently approximated to within a factor of $1 - 1 / e$  \cite{kempe2003maximizing}.

In many social networks, however, the activation probabilities are \emph{unknown}. One possibility is to learn these from past propagation data~\cite{saito2008prediction,goyal2010learning,netrapalli2012learning}. However in practice, such data are hard to obtain and the large number of parameters makes this learning challenging. This motivates the learning framework of IM bandits \cite{vaswani2015influence,valko2016bandits,vaswani2017model}, where the agent needs to learn to choose a good set of source nodes \emph{while} 
repeatedly interacting with the network. Depending on the feedback to the agent, the IM bandits can have (1) full-bandit feedback, where only the \emph{number of influenced nodes} is observed; (2) node semi-bandit feedback, where the \emph{identity of influenced nodes} is observed; or (3) edge semi-bandit feedback, where the \emph{identity of influenced edges} (edges going out from influenced nodes) is observed. In this paper, we give results for the edge semi-bandit feedback model, where we observe for each influenced node, the downstream neighbors that this node influences. Such feedback is feasible to obtain in most online social networks. These networks track activities of users, for instance, when a user retweets a tweet of another user. They can thus trace the propagation (of the tweet) through the network, thereby obtaining edge semi-bandit feedback. 

The IM bandits problem combines two main challenges. First, the number of actions (possible sets)~$\cS$ grows \emph{exponentially} with the cardinality constraint on~$\cS$. Second, the agent can only observe the influenced portion of the network as feedback. Although IM bandits have been studied in the past~\cite{lei2015online,chen2015combinatorial,vaswani2015influence,carpentier2016revealing,vaswani2017model} (see Section~\ref{sec:related work} for an overview and comparison), there are a number of open challenges~\cite{valko2016bandits}. One challenge is to identify reasonable \emph{complexity metrics} that depend on both the topology and activation probabilities of the network and characterize the information-theoretic complexity of the IM bandits problem. Another challenge is to develop learning algorithms such that (i) their performance scales gracefully with these metrics and (ii) are computationally efficient and can be applied to large social networks with millions of users. 

In this paper, we address these two challenges under the IC model with access to edge semi-bandit feedback. We refer to our model as an \emph{independent cascade semi-bandit (ICSB)}. We make four main contributions. First, we propose $\imb$, a {\tt UCB}-like algorithm for ICSBs that permits linear generalization and is suitable for large-scale problems. Second, we define a new complexity metric, referred to as \emph{maximum observed relevance} for ICSB, which depends on the topology of the network and is a non-decreasing function of activation probabilities. The maximum observed relevance $C_*$ can also be upper bounded based on the network topology or the size of the network in the worst case. However, in real-world social networks, due to the relatively low activation probabilities~\cite{goyal2010learning},~$C_*$ attains much smaller values as compared to the worst case upper bounds. Third, we bound the cumulative regret of $\imb$. Our regret bounds are polynomial in all quantities of interest and have near-optimal dependence on the number of interactions. They reflect the structure and activation probabilities of the network through $C_*$ and do not depend on inherently large quantities, such as the reciprocal of the minimum probability of being influenced (unlike~\cite{chen2015combinatorial}) and the cardinality of the action set. Finally, we evaluate $\imb$ on several problems. Our empirical results on simple representative topologies show that the regret of $\imb$ scales as suggested by our topology-dependent regret bounds. We also show that $\imb$ with linear generalization can lead to low regret in real-world online influence maximization problems. 

\newcommand{\Sopt}{\cS^{\mathrm{opt}}}
\section{Influence Maximization under Independence Cascade Model}
\label{sec:imp}

In this section, we define notation and give the formal problem statement for the IM problem under the IC model.
Consider a directed graph $\cG=\left( \cV, \cE \right)$
with a set $\cV=\left \{ 1, 2, \ldots, L\right\}$ of $L=|\cV|$ nodes,
a set $\cE= \left \{ 1, 2, \ldots, |\cE|\right \}$ of directed edges, and an arbitrary \emph{binary}
weight function $\bw:\cE \rightarrow \{ 0,1 \}$. 
We say that a node $v_2 \in \cV$ is \emph{reachable} from a node 
$v_1 \in \cV$ under $\bw$ if there is a directed path\footnote{As is standard in graph theory, a directed path is a sequence of directed edges connecting a sequence of distinct nodes, under the restriction that all edges are directed in the same direction.} $p=(e_1, e_2, \ldots, e_l)$ from $v_1$ to $v_2$ in $\cG$ satisfying $\bw(e_i)=1$
for all $i =1,2,\ldots, l$, where $e_i$ is the $i$-th edge in $p$.  
For a given source node set $\cS \subseteq \cV$ and $\bw$, we say that node $v \in \cV$ is \emph{influenced} if $v$ is reachable from at least one source node in $\cS$ under $\bw$; and denote the number of influenced nodes in $\cG$ by $f(\cS, \bw)$. By definition, the nodes in $\cS$ are always influenced.

The influence maximization (IM) problem is characterized by a triple $\left(\cG, K, \bar{w} \right)$,
where $\cG$ is a given directed graph, $K \leq L$ is the cardinality of source nodes, and
$\bar{w}: \cE \rightarrow [0,1]$ is a probability weight function 
mapping each edge $e \in \cE$ to a real number $\bar{w}(e) \in [0,1]$.
The agent needs to choose a set of $K$ source nodes $\cS \subseteq \cV$ based on  $\left(\cG, K, \bar{w} \right)$.
Then a random binary weight function $\bw$, which encodes the diffusion process under the IC model, is obtained by independently sampling a Bernoulli random variable $\bw(e) \sim \mathrm{Bern}\left(\bar{w}(e) \right)$ 
for each edge $e \in \cE$. The agent's objective is to maximize the expected number of the influenced nodes:  $\max_{\cS:\, |\cS|=K} f(\cS, \bar{w})$,
where $f(\cS, \bar{w}) \stackrel{\Delta}{=}\E_{\bw} \left[f(\cS, \bw) \right]$
is the expected number of influenced nodes when the source node set is $\cS$ and
$\bw$ is sampled according to $\bar{w}$.\footnote{Notice that the definitions of $f(\cS, \bar{w})$ and
$f(\cS, \bw)$ are consistent in the sense that if $\bar{w} \in \{0,1\}^{|\cE|}$, then $f(\cS, \bar{w})=f(\cS, \bw)$ with probability $1$.
} 

It is well-known that the (offline) IM problem is NP-hard \cite{kempe2003maximizing}, but can be approximately solved by approximation/randomized algorithms \cite{chen2010scalable} under the IC model. In this paper, we refer to such algorithms as oracles to distinguish them from the machine learning algorithms discussed in following sections.
Let $\Sopt$ be the optimal solution of this problem, and $\cS^*=\oracle (\cG, K, \bar{w})$ be the (possibly random) solution of an oracle $\oracle$. For any $\alpha, \gamma \in [0,1]$, we say that $\oracle$ is an $(\alpha, \gamma)$-approximation oracle for a given $(\cG, K)$ if for any $\bar{w}$, $f(\cS^*, \bar{w}) \geq \gamma f(\Sopt, \bar{w})$ with probability at least $\alpha$. 
Notice that this further implies that $\E \left[f(\cS^*, \bar{w}) \right] \geq \alpha \gamma f(\Sopt, \bar{w})$. We say an oracle is exact if 
$\alpha =\gamma =1$.

\section{Influence Maximization Semi-Bandit}
In this section, we first describe the IM semi-bandit problem. Next, we state the linear generalization assumption and describe $\imb$, our \UCB-based semi-bandit algorithm. 

\subsection{Protocol}
The \emph{independent cascade semi-bandit (ICSB)} problem is also characterized by a triple $\left(\cG, K, \bar{w} \right)$, but $\bar{w}$ is \emph{unknown} to the agent. The agent interacts with the independent cascade semi-bandit for $n$ rounds. At each round $t=1,2,\ldots, n$, the agent first chooses a source node set $\cS_t \subseteq \cV$ with cardinality $K$ based on its prior information and past observations. Influence then diffuses from the nodes in $\cS_t$ according to the IC model. Similarly to the previous section, this can be interpreted as the environment generating a binary weight function $\bw_t$ by independently sampling $\bw_t(e) \sim \mathrm{Bern} \left( \bar{w}(e) \right)$ for each $e \in \cE$. At round $t$, the agent receives the reward $f(\cS_t, \bw_t)$, that is equal to the number of nodes influenced at that round. The agent also receives edge semi-bandit feedback from the diffusion process. Specifically, for any edge $ e =(u_1, u_2) \in \cE$, the agent observes the realization of $\bw_t(e)$ if and only if the start node $u_1$ of the directed edge $e$ is influenced in the realization $\bw_t$. The agent's objective is to maximize the expected cumulative reward over the $n$~steps. 

\subsection{Linear generalization}
Since the number of edges in real-world social networks tends to be in millions or even billions, we need to exploit some generalization model across 
activation probabilities to develop efficient and deployable learning algorithms. In particular, we assume that there exists a linear-generalization model for the probability weight function $\bar{w}$. That is, each edge $e \in \cE $ is associated with a \emph{known} feature vector $x_e \in \Re^d$ (here $d$ is the dimension of the feature vector) and that there is an \emph{unknown} coefficient vector $\theta^* \in \Re^d$ such that for all $e \in \cE$, $\bar{w}(e)$ is ``well approximated" by $x_{e}\transpose \theta^*$. Formally, we assume that $\rho \stackrel{\Delta}{=}\max_{e \in \cE}  | \bar{w}(e) -x_{e}\transpose \theta^*  |$ is small. In Section~\ref{sec:FB-experiment}, we see that such a linear generalization leads to efficient learning in real-world networks. Note that all vectors in this paper are column vectors.

Similar to the existing approaches for linear bandits~\cite{abbasi2011improved, dani2008stochastic}, we exploit the linear generalization to develop a learning algorithm for ICSB. Without loss of generality, we assume that $\| x_{e} \|_2 \leq 1$ for all $e \in \cE$. Moreover, we use $\bX \in \Re^{|\cE| \times d}$ to denote the feature matrix, i.e., the row of $\bX$ associated with edge $e$ is $x_{e}\transpose$. Note that if a learning agent does not know how to construct good features, it can always choose the na\"{i}ve feature matrix $\bX=\bI \in \Re^{|\cE| \times |\cE|}$ and have no generalization model across edges. We refer to the special case $\bX=\bI \in \Re^{|\cE| \times |\cE|}$ as the \emph{tabular} case.

\subsection{$\imb$ algorithm}
In this section, we propose Influence Maximization Linear UCB ($\imb$), detailed in Algorithm~\ref{alg:imb}.
Notice that $\imb$ represents its past observations as a positive-definite matrix (\emph{Gram matrix}) $\bM_t \in \Re^{d \times d}$ and a vector $B_t \in \Re^d$.  Specifically, let $\bX_t$ be a matrix whose rows are the feature vectors of all observed edges in $t$ steps and $Y_t$ be a binary column vector encoding the realizations of all observed edges in $t$ steps. Then $\bM_t=\bI+\sigma^{-2} \bX_t\transpose \bX_t$ and $B_t=\bX_t\transpose Y_t$.

At each round $t$, $\imb$ operates in three steps: First, it computes an upper confidence bound $U_t(e)$ for each edge $e \in \cE$.
Note that $\mathrm{Proj}_{[0,1]}(\cdot)$ projects a real number into interval $[0,1]$ to ensure that $U_t \in [0, 1]^{|\mathcal{E}| }$.
Second, it chooses a set of source nodes based on the given $\oracle$ and $U_t$, which is also a probability-weight function. Finally, it receives the edge semi-bandit feedback and uses it to update $\bM_t$ and $B_t$. 
It is worth emphasizing that $\imb$ is computationally efficient as long as $\oracle$ is computationally efficient. 
Specifically, at each round $t$, the computational complexities of both Step 1 and 3 of $\imb$ are $\cO \left(|\cE| d^2 \right)$.\footnote{Notice that in a practical implementation, we store $\bM_t^{-1}$ instead of $\bM_t$. Moreover, $\bM_t \leftarrow \bM_t + \sigma^{-2} x_{e} x_{e}\transpose $ is equivalent to $\bM_t^{-1} \leftarrow \bM_t^{-1} -\frac{\bM_t^{-1} x_e x_e\transpose \bM_t^{-1}}{ x_e\transpose \bM_t^{-1} x_e + \sigma^2}$.}

\begin{algorithm}[t]
\caption{$\imb$: Influence Maximization Linear \UCB}
\label{alg:imb}
\begin{algorithmic}
\STATE \textbf{Input:} graph $\cG$, source node set cardinality $K$, oracle $\oracle$, feature vector $x_e$'s, and algorithm parameters $\sigma, c>0$,
\vspace{0.15cm}
\STATE \textbf{Initialization:} $B_0 \leftarrow 0 \in \Re^d$, $\bM_0 \leftarrow \mathbf{I} \in \Re^{d \times d}$
\vspace{0.15cm}
\FOR{$t=1,2,\ldots,n$}
\STATE 1. set $\bar{\theta}_{t-1} \leftarrow \sigma^{-2} \bM_{t-1}^{-1} B_{t-1}$ and the UCBs as
$
U_t(e) \leftarrow \mathrm{Proj}_{[0,1]} \left( x_{e}\transpose \bar{\theta}_{t-1} + c \sqrt{x_{e}\transpose \bM_{t-1}^{-1} x_{e}} \right)
$
for all $ e \in \cE$
\STATE 2. choose $\cS_t \in \oracle(\cG, K, U_t)$, and observe the edge-level semi-bandit feedback
\STATE 3. update statistics:
\STATE \quad (a) initialize $\bM_t \leftarrow \bM_{t-1}$ and $B_t \leftarrow B_{t-1}$
\STATE \quad (b) for all observed edges $e \in \cE$, update $\bM_t \leftarrow \bM_t + \sigma^{-2} x_{e} x_{e}\transpose$ and $B_t \leftarrow B_t + x_{e} \bw_t(e)$
\ENDFOR
\end{algorithmic}
\end{algorithm}

It is worth pointing out that in the tabular case, $\imb$ reduces to $\cucb$ \cite{chen13combinatorial}, 
in the sense that the confidence radii in $\imb$ are the same as those in $\cucb$, up to logarithmic factors.
That is, $\cucb$ can be viewed as a special case of $\imb$ with $\bX=\bI$.

\subsection{Performance metrics}
Recall that the agent's objective is to maximize the expected cumulative reward, which is equivalent to minimizing the expected cumulative regret. The cumulative regret is the loss in reward (accumulated over rounds) because of the lack of knowledge of the activation probabilities. Observe that in each round $t$, $\imb$ needs to use an approximation/randomized algorithm $\oracle$ for solving the offline IM problem. Naturally, this can lead to $\cO(n)$ cumulative regret, since at each round there is a non-diminishing regret due to the approximation/randomized nature of $\oracle$. To analyze the performance of $\imb$ in such cases, we define a more appropriate performance metric, the scaled cumulative regret, as
$R^{\eta}(n)=\sum_{t=1}^n \E \left[ \textstyle R_t^{\eta} \right]$, where $n$ is the number of steps, $\eta>0$ is the scale, and $R_t^{\eta} =  f(\Sopt, \bw_t) -\frac{1}{\eta} f(\cS_t, \bw_t)$ is the $\eta$-scaled realized regret $R_t^{\eta}$ at round~$t$. When $\eta=1$, $R^{\eta}(n)$ reduces to the standard expected cumulative regret $R(n)$.

\newcommand{\comp}{C_*}
\newcommand{\mre}{E_*}

\section{Analysis}
\label{sec:analysis}

In this section, we give a regret bound for $\imb$ for the case when $\bar{w}(e) = x_{e}\transpose \theta^*$ for all $e \in \cE$, i.e., the linear generalization is perfect. 
Our main contribution is a 
 regret bound that scales with a 
new complexity metric, \emph{\metric}, which depends on \emph{both} the topology of $\cG$ and the probability weight function $\bar{w}$, and is defined in Section~\ref{sec:complexity}. We highlight 
this as most known results for this problem are worst case, 
and some of them do not depend on probability weight function at all.

\newcommand{\compG}{C_{\cG}}

\subsection{Maximum observed relevance}
\label{sec:complexity}

We start by defining some terminology. For given directed graph $\cG = (\cV, \cE)$ and source node set $\cS \subseteq \cV$,
we say an edge $e \in \cE$ is \emph{relevant} to a node $v \in \cV \setminus \cS$ under $\cS$
 if there exists a path $p$ from a source node
$s \in \cS$ to $v$ such that (1) $e \in p$ and (2) $p$ does not contain another source node other than $s$.
Notice that with a given $\cS$, whether or not a node $v \in \cV \setminus \cS$ is influenced only depends on the binary weights~$\bw$ on its relevant edges.
For any edge $e \in \cE$, we define $N_{\cS, e}$ as the number of nodes in $\cV \setminus \cS$ 
it is relevant to, and define $P_{\cS, e}$ as the conditional probability that $e$ is observed given $\cS$,
\begin{align}
\label{eqn:N}
N_{\cS, e}  \stackrel{\Delta}{=} \textstyle \sum_{v \in \cV \setminus \cS} \mathbf{1} \left \{ \text{$e$ is relevant to $v$ under $\cS$} \right \}
\quad \text{and} \quad P_{\cS, e}  \stackrel{\Delta}{=} \bbP \left( \text{$e$ is observed}\ \middle | \, \cS \right) .
\end{align}
Notice that $N_{\cS,e}$ only depends on the topology of $\cG$, while $P_{\cS, e}$ depends on \emph{both} the topology of~$\cG$ and the probability weight $\bar{w}$.
The \emph{\metric} $\comp$ is defined as the maximum (over~$\cS$) 2-norm of $N_{\cS, e}$'s weighted by $P_{\cS, e}$'s,
\begin{align}
\label{eqn:complexity}
\comp \stackrel{\Delta}{=} \textstyle \max_{\cS:\, |\cS|=K}\sqrt{ \sum_{e \in \cE} N^2_{\cS, e} P_{\cS, e} }.
\end{align}
As is detailed in the proof of Lemma~\ref{lemma:worst} in Appendix~\ref{sec:proof_main}, $\comp$ arises in the step where Cauchy-Schwarz inequality is applied. 
Note that $\comp$ also depends on both the topology of $\cG$ and the probability weight~$\bar{w}$.
However, 
$\comp$ can be bounded from above only based on the topology of $\cG$ or the size of the problem, i.e., $L=|\cV|$ and 
$|\cE|$. Specifically, by defining $\compG \stackrel{\Delta}{=} \textstyle \max_{\cS:\, \abs{\cS} = K} \sqrt{\sum_{e \in \cE} N_{\cS, e}^2} $, we have
\begin{equation}
\comp \leq
\compG = \textstyle
\max_{\cS:\, \abs{\cS} = K} \sqrt{\sum_{e \in \cE} N_{\cS, e}^2} \leq (L - K) \sqrt{|\cE|} =\cO\left( L \sqrt{|\cE|} \right) = \cO\left(L^2 \right),
\end{equation}
where $\compG$ is the maximum/worst-case (over $\bar{w}$) $\comp$ for the directed graph $\cG$, and the maximum is obtained by 
setting $\bar{w}(e)=1$ for all $e \in \cE$.
Since $\compG$ is worst-case, it might be very far away from~$\comp$ if the activation probabilities are small.
Indeed, this is what we expect in typical real-world situations.
Notice also that if $\max_{e \in \cE} \bar{w}(e) \rightarrow 0$, then 
$P_{\cS, e} \rightarrow 0$ for all~$e \notin \cE(\cS)$  and $P_{\cS, e}=1$ for all~$e \in \cE(\cS)$,
where $\cE(\cS)$ is the set of edges with start node in~$\cS$, hence we have
$
\comp \rightarrow C_{\cG}^0 \stackrel{\Delta}{=} \textstyle \max_{\cS:\, |\cS|=K}\sqrt{ \sum_{e \in \cE(\cS)} N^2_{\cS, e}  }$.
In particular, if $K$ is small,
$C_{\cG}^0$ is much less than $\compG$ in many topologies.
For example, in a complete graph with $K=1$, $\compG= \Theta(L^2)$ while $C_{\cG}^0= \Theta(L^{\frac{3}{2}})$.
Finally, it is worth pointing out that there exist situations $(\cG, \bar{w})$ such that $\comp = \Theta(L^2)$. One such example is when $\cG$ is a complete graph
with $L$ nodes and $\bar{w}(e) = L/(L+1)$ for all edges $e$ in this graph.

\begin{figure}
  \centering
  \includegraphics[ scale=1]{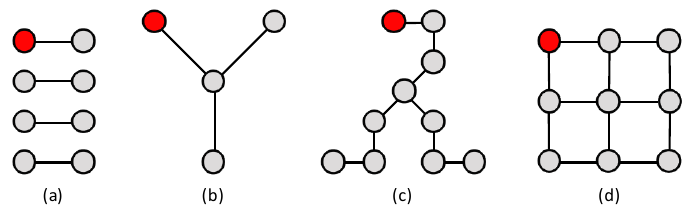} \\
  \caption{
  {\bf a}. Bar graph on $8$ nodes. {\bf b}. Star graph on $4$ nodes. {\bf c}. Ray graph on $10$ nodes.
  {\bf d}. Grid graph on $9$ nodes. Each undirected edge denotes two directed edges in opposite directions.}
  \label{fig:graphs}
  \vspace{-0.05in}
\end{figure}

To give more intuition, in the rest of this subsection, we illustrate how $\compG$, the \emph{worst-case} $\comp$, varies with four graph topologies in Figure~\ref{fig:graphs}:  bar,  star, ray, and  grid, as well as two other topologies: general tree and complete graph.
We fix the node set $\cV=\{1,2, \ldots, L \}$ for all graphs. The bar graph (Figure~\ref{fig:graphs}a) is a graph where nodes $i$ and $i + 1$ are connected when $i$ is odd. The star graph (Figure~\ref{fig:graphs}b) is a graph where node $1$ is central and all remaining nodes $i \in \cV \setminus \set{1}$ are connected to it. The distance between any two of these nodes is $2$. The ray graph (Figure~\ref{fig:graphs}c) is a star graph with $k = \left\lceil\sqrt{L - 1}\right\rceil$ arms, where node $1$ is central and each arm contains either  $\lceil (L - 1)/k \rceil$ or $\lfloor (L - 1)/k \rfloor$ nodes connected in a line. The distance between any two nodes in this graph is $\cO(\sqrt{L})$.
The grid graph (Figure~\ref{fig:graphs}d) is a classical non-tree graph with $\cO(L)$ edges.

To see 
how $\compG$ varies with the graph topology, we start with the simplified case when $K=\abs{\cS} = 1$.
In the bar graph (Figure~\ref{fig:graphs}a), only one edge is relevant to a node $v \in \cV \setminus \cS$ and all the other edges are not relevant to any nodes.
Therefore, $\compG \leq 1$.
In the star graph (Figure~\ref{fig:graphs}b), for any $s$, at most one edge is relevant to at most $L-1$ nodes and the remaining edges are relevant to at most one
node.
In this case, $\compG \leq \sqrt{L^2 + L} = \cO(L)$.
In the ray graph (Figure~\ref{fig:graphs}c), for any $s$, at most $\cO(\sqrt{L})$ edges are relevant to $L-1$ nodes and the remaining edges are relevant to at most 
$\cO(\sqrt{L})$ nodes. In this case, $\compG = \cO(\sqrt{L^\frac{1}{2} L^2 + L L}) = \cO(L^\frac{5}{4})$. 
Finally, recall that for all graphs we can bound $\compG$ by $\cO (L \sqrt{|\cE|})$, regardless of $K$.
Hence, for the grid graph (Figure~\ref{fig:graphs}d) and general tree graph, $\compG=\cO(L^{\frac{3}{2}})$ since $|\cE|=\cO(L)$;
for the complete graph $\compG=\cO(L^{2})$ since $|\cE|=\cO(L^2)$.
Clearly,~$\compG$ varies widely with the topology of the graph. The second column of Table~\ref{table:topology} summarizes how $\compG$ varies with 
the above-mentioned graph topologies for  general $K=\abs{\cS}$.

\subsection{Regret guarantees}
\label{sec:regret bound}

Consider  
$\comp$ defined in Section~\ref{sec:complexity} and 
recall the worst-case upper bound $\comp \leq  (L -K) \sqrt{|\cE|}$,
we have the following regret guarantees for $\imb$.
\begin{theorem}
\label{thm:main}
Assume that (1)  $\bar{w}(e) = x_{e}\transpose \theta^*$ for all $e \in \cE$ and (2) $\oracle$ is an $(\alpha, \gamma)$-approximation algorithm. Let $D$ be a known upper bound on $\|\theta^* \|_2$, if we apply $\imb$ with $\sigma=1$ and
\begin{equation}
\label{eqn:c_lower}
c = \sqrt{d \log \left ( 1+ \frac{n |\cE|}{d}\right) + 2 \log \left ( n(L+1-K) \right)} + D,
\end{equation}
then we have
\begin{align}
R^{\alpha \gamma}(n) \leq  & \, \frac{2c \comp}{\alpha \gamma}  \sqrt{ d n |\cE| \log_2 \left( 1+ \frac{n |\cE|}{d}\right)}
 +  1  
 = \, \tilde{\cO} \left( d \comp \sqrt{|\cE| n} /(\alpha \gamma)  \right)  \label{bound:general} \\
 \leq & \, \tilde{\cO} \left( d (L-K) |\cE| \sqrt{n} /(\alpha \gamma)  \right). \label{bound:size}
\end{align}
Moreover, if the feature matrix $\bX=\bI \in \Re^{|\cE| \times |\cE|}$ (i.e., the tabular case), we have
\begin{align}
R^{\alpha \gamma}(n) \leq &  \,  \frac{2c \comp}{\alpha \gamma}  \sqrt{ n |\cE| \log_2 \left( 1+ n \right)}
 +  1 
 = \, \tilde{\cO}  \left( |\cE| \comp \sqrt{n} /(\alpha \gamma) \right) \label{bound:tabular} \\
 \leq & \, \tilde{\cO} \left( (L-K) |\cE|^{\frac{3}{2}} \sqrt{n} /(\alpha \gamma)  \right).  \label{bound:tabular_size} 
\end{align}
\end{theorem}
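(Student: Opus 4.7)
The plan is to follow the by-now-standard linear-UCB regret template, specialized to the independent cascade observation structure, with the topology-dependent factor $\comp$ emerging from a single application of Cauchy-Schwarz.

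First I would define the concentration event
\[
\mathcal{F} = \Bigl\{\, \forall\, t \le n,\ \forall e \in \cE:\ |x_e\transpose \bar\theta_{t-1} - x_e\transpose \theta^*| \le c\sqrt{x_e\transpose \bM_{t-1}^{-1} x_e}\, \Bigr\},
\]
so that under $\mathcal{F}$, $\bar{w}(e) \le U_t(e) \le \bar{w}(e) + 2c\sqrt{x_e\transpose \bM_{t-1}^{-1} x_e}$. Using the self-normalized Abbasi-Yadkori–Pál–Szepesvári bound (with the noise being the Bernoulli observation centred at $x_e\transpose\theta^*$), together with a union bound built into the choice of $c$ in~\eqref{eqn:c_lower}, I would obtain $\bbP(\mathcal{F}^c) \le 1/[n(L+1-K)]$, which on the complement contributes at most $+1$ to $R^{\alpha\gamma}(n)$ since per-round regret is at most $L-K+1$.

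Second, under $\mathcal{F}$, monotonicity of $f(\cS,\cdot)$ in the edge weights and the $(\alpha,\gamma)$-approximation property of $\oracle$ chain into
\[
\alpha\gamma f(\Sopt, \bar{w}) \le \alpha\gamma f(\Sopt, U_t) \le \E\bigl[f(\cS_t, U_t) \,\big|\, \mathcal{H}_{t-1}\bigr],
\]
so the per-round scaled expected regret is at most $\tfrac{1}{\alpha\gamma}\, \E[f(\cS_t,U_t)-f(\cS_t,\bar{w})]$. Now I would invoke the ICM gap lemma, which says that for any $\cS$ and any $w_1 \ge w_2$ in $[0,1]^{|\cE|}$,
\[
f(\cS,w_1) - f(\cS,w_2) \le \sum_{e\in\cE} P_{\cS,e}\, N_{\cS,e}\,\bigl(w_1(e)-w_2(e)\bigr),
\]
where $P_{\cS,e}$ is the observation probability under $w_1$. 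Applying this to $w_1=U_t$, $w_2=\bar{w}$ and using the width bound $U_t(e)-\bar{w}(e) \le 2c\sqrt{x_e\transpose \bM_{t-1}^{-1} x_e}$ gives a per-round bound of
\[
2c \sum_{e\in\cE} P_{\cS_t,e}\, N_{\cS_t,e}\,\sqrt{x_e\transpose \bM_{t-1}^{-1} x_e}.
\]

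Third, I would apply Cauchy-Schwarz edge-wise, grouping the factors as $\bigl(N_{\cS_t,e}\sqrt{P_{\cS_t,e}}\bigr)\cdot\bigl(\sqrt{P_{\cS_t,e}}\sqrt{x_e\transpose \bM_{t-1}^{-1} x_e}\bigr)$, to extract $\sqrt{\sum_e N_{\cS_t,e}^2 P_{\cS_t,e}} \le \comp$. Summing over $t$ and using Cauchy-Schwarz in time gives
\[
\sum_{t=1}^n \sqrt{\sum_{e} P_{\cS_t,e}\, x_e\transpose \bM_{t-1}^{-1} x_e} \le \sqrt{n\, \E\Bigl[\textstyle\sum_{t=1}^n \sum_{e\,\text{obs.\ at }t} x_e\transpose \bM_{t-1}^{-1} x_e\Bigr]},
\]
since $P_{\cS_t,e}$ is exactly the conditional observation probability given the history. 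The inner double sum is controlled by the elliptical potential lemma, $\sum_{t,e\,\text{obs}} x_e\transpose \bM_{t-1}^{-1} x_e \le 2d\log_2(1+n|\cE|/d)$. Chaining these yields the bound~\eqref{bound:general}, and~\eqref{bound:size} follows from the worst-case $\comp \le (L-K)\sqrt{|\cE|}$. For the tabular case~\eqref{bound:tabular}, the same argument with $\bX = \bI$ replaces $d$ by $|\cE|$ in the potential bound (and shortens $c$ accordingly, since each coordinate is learned independently).

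The main obstacle is the ICM gap lemma in the second step: it requires a careful path-decomposition argument showing that the difference $f(\cS,w_1)-f(\cS,w_2)$ in the independent cascade model is controlled by a \emph{single-edge} sum in which each edge $e$ contributes only when it is both relevant to some otherwise-uninfluenced node and actually reached by the diffusion. The factor $N_{\cS,e}$ captures the number of nodes a single pivotal flip of $\bw(e)$ can newly activate, and $P_{\cS,e}$ ensures that only edges whose start node is activated under $w_1$ contribute. Once this lemma is in hand, the remaining steps are template linear-bandit calculations.
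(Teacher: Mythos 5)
Your overall architecture matches the paper's: the same concentration event, the same monotonicity\,/\,$(\alpha,\gamma)$-approximation chain, Cauchy--Schwarz to extract $\comp$, and the self-normalized bound for the failure probability. However, two steps are asserted rather than established, and one of them is stated incorrectly. First, the ``ICM gap lemma'' is the real content of the proof and your sketch does not prove it. The paper's route (its Theorem~\ref{theorem:graph}, via Lemmas~\ref{lemma:derivative}--\ref{lemma:derivative_observability}) decomposes $f$ over target nodes $v$, shows the first partials $\partial f/\partial w(e)$ are nonnegative and all mixed second partials are nonpositive (the latter by submodularity of adding a deterministic edge), uses the resulting concavity along the segment from $\bar{w}$ to $U_t$ to get the first-order bound $f(\cS,U_t,v)-f(\cS,\bar{w},v)\le\sum_e \partial_e f(\cS,\bar{w},v)\,[U_t(e)-\bar{w}(e)]$, and only then bounds $\partial_e f(\cS,\bar{w},v)$ by the probability that the start node of $e$ is influenced. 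Note the derivative is evaluated at the \emph{lower} endpoint $\bar{w}$, so the observation probability that appears is the one under $\bar{w}$ --- which is what the definition of $\comp$ in Equation~\ref{eqn:complexity} uses and what lets you identify $P_{\cS_t,e}$ with the realized observation process. Your statement puts $P_{\cS,e}$ under $w_1=U_t$; as written that quantity is neither bounded by $\comp$ nor equal to the conditional observation probability you invoke two lines later, so your lemma statement and its subsequent use are inconsistent. The ``pivotal flip'' intuition is the right idea for the derivative bound, but without the concavity argument a first-order expansion at $\bar{w}$ is not an upper bound on the gap.

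Second, your elliptical potential bound $\sum_{t}\sum_{e\,\mathrm{obs}} x_e\transpose \bM_{t-1}^{-1} x_e \le 2d\log_2(1+n|\cE|/d)$ does not follow from the standard lemma and, if true, would prove a bound $\sqrt{|\cE|}$ \emph{stronger} than Equation~\ref{bound:general} --- an improvement the paper only conjectures. The difficulty is that $\bM_t$ is updated once per round, so all edges observed at round $t$ are measured against the same $\bM_{t-1}$ rather than against incrementally updated matrices; the one-rank-one-update-per-step telescoping argument does not apply. The paper's Lemma~\ref{lemma:worst} handles this by raising the determinant recursion to the power $|\cE_t^o|\le \mre$, which costs an extra factor of $\mre\le|\cE|$ in the potential bound and is exactly the origin of the $\sqrt{|\cE|}$ inside the square root of Equation~\ref{bound:general} (the tabular case avoids it because $\bM_t$ is diagonal). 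You need to either reproduce that device or find another way to control the batched updates; as written this step is a gap.
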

Please refer to Appendix~\ref{sec:proof_main} for the proof of Theorem~\ref{thm:main}, that we outline in Section~\ref{sec:proof_sketch}.
We now briefly comment on the regret bounds  in Theorem~\ref{thm:main}.

\noindent \textbf{Topology-dependent bounds:} Since $\comp$ is topology-dependent, the regret bounds in Equations~\ref{bound:general} and~\ref{bound:tabular}
are also topology-dependent. Table~\ref{table:topology} summarizes the regret bounds for each topology\footnote{The regret bound for bar graph is based on Theorem~\ref{thm:stronger} in the appendix, which is a stronger version of Theorem~\ref{thm:main} for disconnected graph.} discussed in Section~\ref{sec:complexity}.
Since the regret bounds in Table~\ref{table:topology} are the worst-case regret bounds for a given topology, more general topologies have larger regret bounds.
For instance, the regret bounds for tree are larger than their counterparts for star and ray, since star and ray are special trees. 
The grid and tree can also be viewed as special complete graphs by setting $\bar{w}(e)=0$ for some $e \in \cE$, hence
complete graph has larger regret bounds. Again, in practice we expect $\comp$ to be far smaller due to activation probabilities.

\begin{table}[t]
\begin{center}
\begin{tabular}{|c|c|c|c|}
\hline
\hline
topology  & $\compG$ (worst-case $\comp$) & $R^{\alpha \gamma}(n)$ for general $\bX$ &  $R^{\alpha \gamma}(n)$ for $\bX=\bI$ \\
\hline
\hline
bar graph & $\cO(\sqrt{K})$ & $\tilde{\cO} \left( d K \sqrt{n} / (\alpha \gamma) \right)$ &  $\tilde{\cO} \left( L \sqrt{K n} / (\alpha \gamma)  \right)$ \\
\hline
star graph &$\cO( L \sqrt{K})$ & $ \tilde{\cO} \left( d L^{\frac{3}{2}} \sqrt{Kn} / (\alpha \gamma)  \right)$  &  $ \tilde{\cO} \left( L^{2} \sqrt{Kn} / (\alpha \gamma)  \right)$  \\
\hline
ray graph & $\cO( L^{\frac{5}{4}} \sqrt{K})$ &  $\tilde{\cO} \left( d L^{\frac{7}{4}} \sqrt{Kn} / (\alpha \gamma)  \right)$  & $ \tilde{\cO} \left( L^{\frac{9}{4}} \sqrt{Kn} / (\alpha \gamma)  \right)$ \\
\hline
tree graph &$\cO(L^{\frac{3}{2}})$ & $\tilde{\cO} \left( d L^{2} \sqrt{n} / (\alpha \gamma)  \right)$  &  $\tilde{\cO} \left(  L^{\frac{5}{2}} \sqrt{n} / (\alpha \gamma)  \right)$ \\
\hline
grid graph & $\cO(L^{\frac{3}{2}})$ & $\tilde{\cO} \left( d L^{2} \sqrt{n} / (\alpha \gamma)  \right)$ & $\tilde{\cO} \left(  L^{\frac{5}{2}} \sqrt{n} / (\alpha \gamma)  \right)$  \\
\hline
complete graph &$\cO(L^{2})$ &$\tilde{\cO} \left( d L^3 \sqrt{n} / (\alpha \gamma)  \right) $ & $\tilde{\cO} \left( L^4 \sqrt{n} / (\alpha \gamma)  \right) $   \\
\hline
\hline
\end{tabular}
\end{center}
\caption{$\compG$ and \emph{worst-case} regret bounds for different graph topologies.}
\label{table:topology}
\end{table}%

\noindent \textbf{Tighter bounds in tabular case and under exact oracle:} Notice that for the tabular case with feature matrix $\bX=\bI$ and $d = |\cE|$, $\tilde{\cO} ( \sqrt{|\cE|} )$ tighter regret bounds are obtained in Equations~\ref{bound:tabular} and~\ref{bound:tabular_size}.
Also notice that the $\tilde{\cO}(1/(\alpha \gamma))$ factor is due to the fact that $\oracle$ is an
$(\alpha, \gamma)$-approximation oracle. If $\oracle$ solves the IM problem exactly (i.e., $\alpha=\gamma=1$), then $R^{\alpha \gamma}(n)=R(n)$.

\textbf{Tightness of our regret bounds:}
First, 
note that our regret bound in 
the bar case with $K=1$ matches the regret bound of the classic ${\tt LinUCB}$ algorithm. Specifically, with perfect linear generalization, this case is equivalent to a linear bandit problem with~$L$ arms and feature dimension $d$. 
From Table~\ref{table:topology}, our regret bound in this case is $\tilde{\cO} \left( d  \sqrt{ n} \right)$, 
which matches the known regret bound of ${\tt LinUCB}$ that can be obtained by the technique of~\cite{abbasi2011improved}.
Second, we briefly discuss the tightness of the regret bound in Equation~\ref{bound:size} for a general graph with
$L$ nodes and $|\cE|$ edges. Note that the $\tilde{\cO}(\sqrt{n})$-dependence on 
time is near-optimal, and the $\tilde{\cO}(d)$-dependence on feature dimension 
is standard in linear bandits \cite{abbasi2011improved,wen15efficient}, since $\tilde{\cO}(\sqrt{d})$ results are only known for impractical algorithms. The $\tilde{\cO}(L-K)$ factor is due to the fact that the reward in this problem
is from $K$ to $L$, rather than from $0$ to $1$. To explain the $\tilde{\cO}(|\cE|)$ factor in this bound, notice that
one $\tilde{\cO}(\sqrt{|\cE|})$ factor is due to the fact that at most $\tilde{\cO}(|\cE|)$ edges might be observed at each round (see Theorem~\ref{theorem:graph}), and is intrinsic to the problem similarly to combinatorial semi-bandits \cite{kveton15tight};
another $\tilde{\cO}(\sqrt{|\cE|})$ factor is due to linear generalization (see Lemma~\ref{lemma:worst}) and might be removed by better analysis.
We conjecture that our $ \tilde{\cO} \left( d (L-K) |\cE| \sqrt{n} /(\alpha \gamma)  \right)$ regret bound in this case is at most
$\tilde{\cO} (\sqrt{|\cE| d} )$ away from being tight.

\subsection{Proof sketch}
\label{sec:proof_sketch}
We now outline the proof of Theorem~\ref{thm:main}.
For each round $t \leq n$, we define the favorable event
$\xi_{t-1}=  \{
|x_{e}\transpose (\bar{\theta}_{\tau-1} - \theta^*)| \leq c \sqrt{x_{e}\transpose \bM_{\tau-1}^{-1} x_{e}}, \, \forall e \in \cE, \, \forall \tau \leq t
 \}$, and the unfavorable event $\bar{\xi}_{t-1}$ as the complement of $\xi_{t-1}$.
If we decompose $\E [R^{\alpha \gamma}_t] $, the $(\alpha \gamma)$-scaled 
expected regret at round $t$, over events $\xi_{t-1}$ and $\bar{\xi}_{t-1}$, 
and bound $R^{\alpha \gamma}_t$ on event $\bar{\xi}_{t-1}$ using the na\"{i}ve bound
$R_t^{\alpha \gamma} \leq L-K$, then,
\begin{align}
\E [R^{\alpha \gamma}_t] \leq  \, \bbP\left ( \xi_{t-1} \right) \E \left[  R^{\alpha \gamma}_t \middle | \xi_{t-1} \right]+ \, \bbP\left ( \bar{\xi}_{t-1} \right) [L-K]. 
\nonumber
\end{align}
By choosing $c$ as specified by Equation~\ref{eqn:c_lower}, we have $\bbP\left ( \bar{\xi}_{t-1} \right) [L-K] < 1/n$ (see Lemma~\ref{lemma:concentration} in the appendix).
On the other hand, notice that by definition of $\xi_{t-1}$, $\bar{w}(e) \leq U_t (e)$, $\forall e \in \cE$ under event $\xi_{t-1}$.
Using the monotonicity of~$f$ in the probability weight, and the fact  that $\oracle$ is an $(\alpha, \gamma)$-approximation algorithm, we have
\begin{align}
 \E \left[  R^{\alpha \gamma}_t \middle | \xi_{t-1} \right] \leq  \E \left[  f(\cS_t,  U_t)- f(\cS_t, \bar{w}) \middle | \xi_{t-1} \right] /(\alpha \gamma).
 \nonumber
\end{align}

The next observation is that, from the linearity of expectation, the gap $f(\cS_t,  U_t)- f(\cS_t, \bar{w})$ decomposes over nodes
$v \in \cV \setminus \cS_t$. Specifically, for any source node set $\cS \subseteq \cV$, any probability weight function $w: \cE \rightarrow [0,1]$, and any node $v \in \cV$, we define $f(\cS, w, v)$ as the probability that
node~$v$ is influenced if the source node set is $\cS$ and the probability weight is $w$. 
Hence, we have
\[
f(\cS_t,  U_t)- f(\cS_t, \bar{w}) =  \textstyle \sum_{v \in \cV \setminus \cS_t} \left[  f(\cS_t,  U_t, v)- f(\cS_t, \bar{w}, v)\right]
.\]  
In the appendix, we show that under any weight function, the diffusion process from the source node set $S_t$ to the target node $v$ can be modeled as a Markov chain. Hence, weight function $U_t$ and $\bar{w}$ give us two Markov chains with the same state space but different transition probabilities.  
$f(S_t, U_t, v) - f(S_t, \bar{w}, v)$ can be recursively bounded based on the state diagram of the Markov chain under weight function $\bar{w}$. With some algebra, 
Theorem~\ref{theorem:graph} in Appendix~\ref{sec:proof_main} bounds $f(\cS_t,  U_t, v) - f(\cS_t, \bar{w}, v)$ 
by the edge-level gap $ U_t (e) -\bar{w} (e)$
on the observed relevant edges for node $v$,
\begin{align}
\label{eqn:outline_1}
 f(\cS_t,  U_t, v) - f(\cS_t, \bar{w}, v) 
\leq  \textstyle \sum_{e \in \cE_{\cS_t, v}} \E \left[
\mathbf{1} \left \{ O_{t}(e) \right \} \left[ U_t (e) -\bar{w} (e) \right]  \middle | \cH_{t-1}, \cS_t
\right],
\end{align}
for any $t$, any ``history" (past observations) $\cH_{t-1}$ and $\cS_t$ such that $\xi_{t-1}$ holds, and any $v \in \cV \setminus \cS_t$,
where $\cE_{\cS_t, v}$ is the set of edges relevant to $v$
and 
$O_{t}(e)$ is the event that edge $e$ is observed at round $t$. 
Based on Equation~\ref{eqn:outline_1}, we can prove Theorem~\ref{thm:main} using the standard linear-bandit techniques
(see Appendix~\ref{sec:proof_main}).

\section{Experiments}
In this section, we present a synthetic experiment in order to empirically validate our upper bounds on the regret. Next, we evaluate our algorithm on a real-world Facebook subgraph. 

\subsection{Stars and rays}

\begin{figure}
    \centering
    \begin{subfigure}[b]{0.6\textwidth}
        \includegraphics[width=\textwidth]{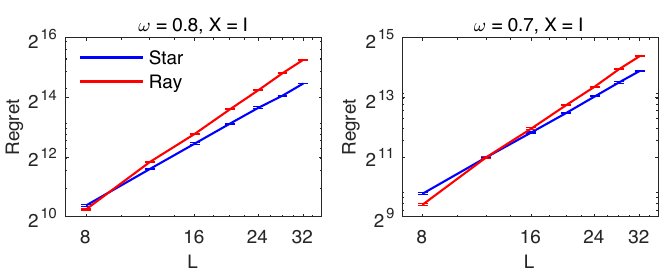}
        \caption{Stars and rays: The log-log plots of the $n$-step regret of $\imb$ in two graph topologies after $n = 10^4$ steps. We vary the number of nodes $L$ and the mean edge weight $\omega$.}
        \label{fig:growth}
    \end{subfigure}
    ~ 
    \begin{subfigure}[b]{0.35\textwidth}
        \includegraphics[width=\textwidth]{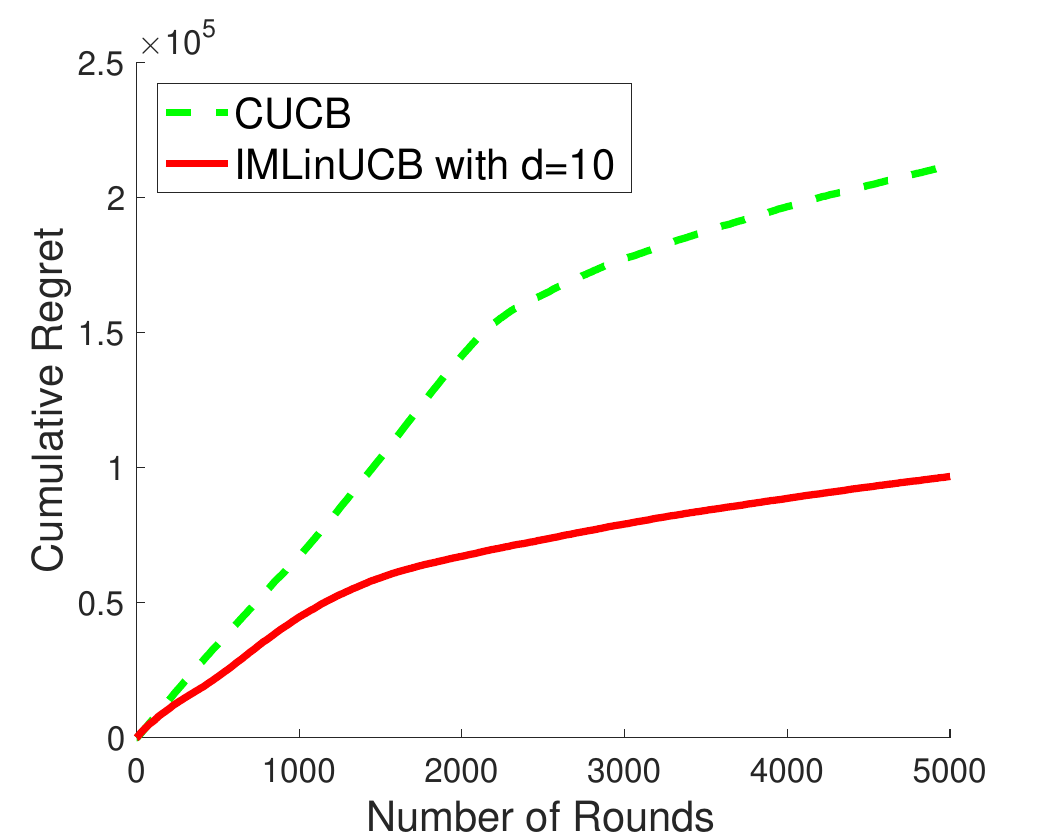}
        \caption{Subgraph of Facebook network}
        \label{fig:FB-regret}
    \end{subfigure}
    \caption{Experimental results}
    \label{fig:experiments}
    \vspace{-0.05in}
\end{figure}
In the first experiment, we evaluate $\imb$ on undirected stars and rays
(Figure~\ref{fig:graphs}) and validate that the regret grows with the number of nodes $L$ and the maximum observed relevance $\comp$ as shown in Table~\ref{table:topology}. We focus on the tabular case ($\bX =\bI$) with $K=\abs{\cS} = 1$, where 
the IM problem can be solved exactly. We vary the number of nodes $L$;  and edge weight $\bar{w}(e)=\omega$, which is the same for all edges~$e$.  We run $\imb$ for $n = 10^4$ steps and verify that it converges to the optimal solution in each experiment. We report the $n$-step regret of $\imb$ for $8 \leq L \leq 32$ in Figure~\ref{fig:growth}. Recall that from Table~\ref{table:topology}, 
$R(n) = \tilde{\cO}(L^2)$ for star and $R(n) = \tilde{\cO}(L^\frac{9}{4})$ for ray.

We numerically  estimate the growth of regret in $L$, the exponent of $L$, in the log-log space of $L$ and regret. In particular, since $\log(f(L)) = p \log(L) + \log(c)$ for any $f(L) = c L^p$ and $c > 0$, both $p$ and $\log(c)$ can be estimated by linear regression in the new space. For star graphs with $\omega = 0.8$ and $\omega = 0.7$, our estimated growth are respectively $\cO(L^{2.040})$ and $\cO(L^{2.056})$, which are close to the expected $\tilde{\cO}(L^2)$. 
For ray graphs with $\omega = 0.8$ and $\omega = 0.7$, our estimated growth are respectively $\cO(L^{2.488})$ and $\cO(L^{2.467})$, which are again close to the expected $\tilde{\cO}(L^\frac{9}{4})$. 
This shows that maximum observed relevance $\comp$ proposed in Section~\ref{sec:complexity} is a reasonable complexity metric for these two topologies.

\subsection{Subgraph of Facebook network}
\label{sec:FB-experiment}
In the second experiment,
we demonstrate the potential performance gain of $\imb$ in real-world influence maximization semi-bandit problems by exploiting linear generalization across edges. Specifically, we compare $\imb$ with $\cucb$ in a subgraph of Facebook network from~\cite{snapnets}.
The subgraph has $L =\vert \cV \vert = 327$ nodes and $\vert \cE \vert = 5038$ directed edges.
Since the true probability weight function $\bar{w}$ is not available, we independently sample $\bar{w}(e)$'s
from the uniform distribution $U(0,0.1)$ and treat them as ground-truth. Note that this range of probabilities is guided by empirical evidence in~\cite{goyal2010learning,barbieri2013topic}. We set $n = 5000$ and $K = 10$ in this experiment.
For $\imb$, we choose $d = 10$ and generate edge feature $x_e$'s as follows: we first use
${\tt node2vec}$ algorithm~\cite{grover2016node2vec} to generate a node feature in $\Re^d$ for each node $v \in \cV$;
then for each edge $e$, we generate $x_e$ as the element-wise product of node features of the two nodes connected to $e$. Note that the linear generalization in this experiment is imperfect in the sense that
$\min_{\theta \in \Re^d} \max_{e \in \cE} |\bar{w}(e) - x_e^T \theta| >0$.
For both $\cucb$ and $\imb$, we choose $\oracle$ as the state-of-the-art offline IM algorithm
proposed in~\cite{Tang2014Influence}. 
To compute the cumulative regret, we compare against a fixed seed set $\cS^*$ obtained by using the
true $\bar{w}$ as input to the oracle proposed in~\cite{Tang2014Influence}. We average the empirical cumulative regret over $10$ independent runs,
and plot the results in Figure~\ref{fig:FB-regret}. The experimental results show that compared with $\cucb$, $\imb$ can significantly reduce the cumulative regret by exploiting linear generalization across $\bar{w}(e)$'s.

\section{Related Work}
\label{sec:related work}
There exist prior results on IM semi-bandits~\cite{lei2015online,chen2015combinatorial,vaswani2015influence}. First, Lei \etal~\cite{lei2015online} gave algorithms for the same feedback model as ours. The algorithms are not analyzed and cannot solve large-scale problems because they estimate each edge weight independently. Second, our setting is a special case of stochastic combinatorial semi-bandit with a submodular reward function and stochastically observed edges~\cite{chen2015combinatorial}.
 Their work is the closest related work.  Their gap-dependent and gap-free bounds are both problematic because they depend on the reciprocal of the minimum observation probability~$p^*$ of an edge: Consider a line graph with $|\cE|$ edges where all edge weights are $0.5$. Then $1/p^*$ is $2^{|\cE| - 1}$. On the other hand, our derived regret bounds in Theorem~\ref{thm:main} are
 polynomial in all quantities of interest. 
 A very recent result of Wang and Chen \cite{DBLP:journals/corr/WangC17a} removes the $1/p^*$ factor in~\cite{chen2015combinatorial}  for the tabular case and presents a worst-case bound of $\tilde{\cO}(L |\cE| \sqrt{n})$, which in the 
 tabular complete graph case improves over our result by $\tilde{\cO}(L)$. 
 On the other hand, their analysis does not give structural guarantees that we provide with maximum observed relevance $C_*$ obtaining potentially much better results for the case in hand and  giving insights for the complexity of IM bandits.
Moreover, both Chen \etal~\cite{chen2015combinatorial} and Wang and Chen \cite{DBLP:journals/corr/WangC17a} do not consider generalization models across edges or nodes, and therefore their proposed algorithms are unlikely to be practical for real-world social networks. In contrast, our proposed algorithm scales to large problems by exploiting linear generalization across edges.

\noindent
\textbf{IM bandits for different influence models and settings:} 
There exist a number of extensions and related results for IM bandits. We 
only mention the most related ones (see \cite{valko2016bandits} for a recent survey).
Vaswani \etal~\cite{vaswani2015influence} proposed a learning algorithm for a different and more challenging feedback model, where the learning agent observes influenced \emph{nodes but not the edges}, but they do not give any guarantees. Carpentier and Valko \cite{carpentier2016revealing} give a minimax optimal algorithm for IM bandits but only consider a \emph{local model} of influence with a \emph{single} source  and a cascade of influences never happens. In related networked bandits~\citep{fang2014networked}, the learner chooses a node and its reward is the \emph{sum} of the rewards of the chosen node and its neighborhood.
 The problem gets more challenging when we allow the influence probabilities to change \citep{bao2016online}, when we allow the seed set to be chosen adaptively \citep{vaswani2016adaptive}, or when we consider a continuous model  \citep{farajtabar2016multistage}. Furthermore, Sigla \etal\,\cite{singla2015information} treats the IM setting with an additional observability constraints, 
where we face a restriction on which nodes we can choose at each round.  This setting is also related to the \emph{volatile multi-armed bandits} where the set of possible arms changes \citep{bnaya2013social}.
Vaswani \etal~\cite{vaswani2017model} proposed a diffusion-independent algorithm for 
IM semi-bandits with a wide range of
diffusion models, based on the maximum-reachability approximation. Despite its wide applicability, 
the maximum reachability approximation introduces an additional approximation factor to the scaled regret bounds. As they have discussed, this approximation factor can be large in some cases.
Lagr\'ee \etal\,\cite{lagree2017algorithms} treat a  \emph{persistent} extension of IM bandits when some 
nodes become persistent over the rounds and no longer yield rewards. 
This work is also a generalization and extension of recent work on cascading bandits \cite{kveton15cascading, kveton15combinatorial, ZongNSKWK16}, since cascading bandits can be viewed as variants of online influence maximization problems with special topologies (chains).

{\small
\paragraph{\small Acknowledgements}
\label{sec:Acknowledgements}
The research presented was supported by French Ministry of
Higher Education and Research, Nord-Pas-de-Calais Regional Council,
Inria and Univert\"at Potsdam associated-team north-european project Allocate, and French National Research Agency projects ExTra-Learn (n.ANR-14-CE24-0010-01) and BoB (n.ANR-16-CE23-0003). We would also like to thank Dr.\,Wei Chen and Mr.\,Qinshi Wang for pointing out a mistake in an earlier version of this paper.}

\newpage

\setstretch{1}

\bibliographystyle{plain}
\bibliography{library-DONOTEDIT,Brano}

\newpage 
\onecolumn
\newcommand{\tr}{\mathrm{trace}}
\newenvironment{oneshot}[1]{\@begintheorem{#1}{\unskip}}{\@endtheorem}

\appendix

\begin{center}
\Large{\bf Appendix}
\end{center}

\section{Proof of Theorem~\ref{thm:main}}
\label{sec:proof_main}

In the appendix, we prove a slightly stronger version of Theorem~\ref{thm:main}, which also uses another complexity metric $\mre$ defined as follows: Assume that the graph  $\cG=(\cV, \cE)$ includes $m$ disconnected subgraphs $\cG_1=(\cV_1, \cE_1), \cG_2=(\cV_2, \cE_2), \ldots, \cG_{m}=(\cV_{m}, \cE_{m})$, 
which are in the descending order based on the number of nodes $|\cE_i|$'s.
We define $\mre$ as the number of edges in the first $\min \{m, K \}$ subgraphs:
\begin{align}
\label{eqn:mre_def}
\mre = \sum_{i=1}^{\min \{m, K \}} |\cE_i|.
\end{align}
Note that by definition, $\mre \leq |\cE|$. Based on $\mre$, we have the following slightly stronger version of Theorem~\ref{thm:main}. 
\begin{theorem}
\label{thm:stronger}
Assume that (1)  $\bar{w}(e) = x_{e}\transpose \theta^*$ for all $e \in \cE$ and (2) $\oracle$ is an $(\alpha, \gamma)$-approximation algorithm. Let $D$ be a known upper bound on $\|\theta^* \|_2$. If we apply $\imb$ with $\sigma=1$ and
\begin{equation}
\label{eqn:stronger:c_lower}
c \geq \sqrt{d \log \left ( 1+ \frac{n \mre}{d}\right) + 2 \log \left ( n(L+1-K) \right)} + D,
\end{equation}
then we have
\begin{align}
\label{bound:stronger:general}
R^{\alpha \gamma}(n) \leq  \, \frac{2c \comp}{\alpha \gamma}  \sqrt{ d n \mre \log_2 \left( 1+ \frac{n \mre}{d}\right)}
 +  1  
 = \, \tilde{\cO} \left( d \comp \sqrt{\mre n}/(\alpha \gamma) \right).
\end{align}
Moreover, if the feature matrix is of the form $X=I \in \Re^{|\cE| \times |\cE|}$ (i.e., the tabular case), we have
\begin{align}
\label{bound:stronger:tabular}
R^{\alpha \gamma}(n) \leq  \,  \frac{2c \comp}{\alpha \gamma}  \sqrt{ n |\cE| \log_2 \left( 1+ n \right)}
 +  1 
 = \, \tilde{\cO}  \left( |\cE| \comp \sqrt{n} /(\alpha \gamma)  \right).
\end{align}
\end{theorem}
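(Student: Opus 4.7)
The plan is to combine a concentration-event argument with a per-round regret decomposition and then sum over rounds via the elliptical-potential lemma. First I will adopt the confidence event $\xi_{t-1}$ from Section~\ref{sec:proof_sketch} and use the standard self-normalized martingale inequality~\cite{abbasi-yadkori11improved} to show that, under the choice of $c$ in~\eqref{eqn:stronger:c_lower}, $\bbP(\bar\xi_{t-1}) \leq 1/\bigl(n(L{+}1{-}K)\bigr)$ uniformly in $t \leq n$. The $\log(1 + n\mre/d)$ term in $c$ arises precisely because the total number of edge observations across the $n$ rounds is bounded by $n\mre$: any source set has at most $\min\{m,K\}$ distinct components, and only edges inside those components can ever be observed. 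Decomposing $\E[R^{\alpha\gamma}_t]$ over $\xi_{t-1}$ and its complement and using the naive bound $R^{\alpha\gamma}_t \leq L-K$ on the complement shows that failure contributes at most $1$ to the total regret.

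Conditional on $\xi_{t-1}$ I have $U_t(e) \geq \bar w(e)$ for every $e$, so monotonicity of $f$ in its weight argument combined with the $(\alpha,\gamma)$-approximation guarantee for $\oracle$ yields $\E[R^{\alpha\gamma}_t \mid \cH_{t-1}, \xi_{t-1}] \leq (\alpha\gamma)^{-1}\bigl[f(\cS_t, U_t) - f(\cS_t, \bar w)\bigr]$. I then decompose this gap over $v \in \cV \setminus \cS_t$, invoke the structural bound~\eqref{eqn:outline_1} for each node-level term, and swap the sums over $v$ and $e$ to collect the multiplicity $N_{\cS_t, e}$. Substituting the confidence bound $U_t(e) - \bar w(e) \leq 2c\sqrt{x_e^\top \bM_{t-1}^{-1} x_e}$ leaves a per-round quantity of the form $\tfrac{2c}{\alpha\gamma}\,\E\bigl[\sum_e \mathbf{1}\{O_t(e)\}\,N_{\cS_t, e}\sqrt{x_e^\top \bM_{t-1}^{-1} x_e} \,\bigm|\, \cH_{t-1}, \cS_t\bigr]$. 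Summing over $t$ and applying Cauchy--Schwarz to the $(t,e)$-indexed pairs $\bigl(\mathbf{1}\{O_t(e)\}\,N_{\cS_t, e},\; \mathbf{1}\{O_t(e)\}\sqrt{x_e^\top \bM_{t-1}^{-1} x_e}\bigr)$ separates the topology factor from the linear-bandit potential: the first squared norm has expectation $\sum_t\sum_e P_{\cS_t, e} N_{\cS_t, e}^2 \leq n\comp^2$ by~\eqref{eqn:complexity}, while the second is controlled deterministically by the elliptical-potential lemma to $2d\log_2(1 + n\mre/d)$ since $\|x_e\|_2 \leq 1$, $\bM_{t-1} \succeq \bI$, and the cumulative observation count is at most $n\mre$. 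A Jensen step on the outer square root assembles these into~\eqref{bound:stronger:general}. For the tabular case $\bX = \bI$, the elliptical potential reduces to a per-edge scalar $\log_2(1 + T_{n,e})$; concavity together with $\sum_e T_{n,e} \leq n\mre$ shaves the $\sqrt d$ factor and yields~\eqref{bound:stronger:tabular}.

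The main obstacle is the invocation of the node-level gap bound~\eqref{eqn:outline_1} itself: it rests on a graph-theoretic analysis of the first- and second-order partial derivatives of $f(\cS, \bar w, v)$ in the edge weights, exploiting monotonicity of $f$, its concavity in $\bar w$, and a submodularity property with respect to newly added edges. Once that tool is in hand, the remaining work is a careful marriage of standard linear-bandit machinery with the topology-aware metric $\comp$, and the only real subtlety is the bookkeeping of conditioning on $\cH_{t-1}$, $\cS_t$, and $\xi_{t-1}$ so that the tower property and Jensen apply cleanly.
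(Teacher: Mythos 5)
Your overall architecture is the same as the paper's: the favorable event $\xi_{t-1}$ with the self-normalized bound and $\delta = 1/(n(L+1-K))$, the per-round decomposition via monotonicity and the $(\alpha,\gamma)$-oracle, the node-wise split and the gap bound~\eqref{eqn:outline_1}, the swap of sums to collect $N_{\cS_t,e}$, Cauchy--Schwarz to separate $\comp$ from the potential term, and Jensen at the end. The tabular argument and the treatment of the failure event are also as in the paper.

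There is, however, one genuine gap, and it sits exactly at the quantitative heart of the bound. You claim that the potential term $\sum_{t=1}^n\sum_{e}\mathbf{1}\{O_t(e)\}\,x_e\transpose \bM_{t-1}^{-1}x_e$ is controlled by the standard elliptical-potential lemma to $2d\log_2(1+n\mre/d)$ because the cumulative observation count is at most $n\mre$. This does not follow: the standard lemma requires that each observation be measured against a Gram matrix that already contains all \emph{previous} observations, whereas here every edge observed in round $t$ is measured against the same stale matrix $\bM_{t-1}$, which omits the other edges observed in that round. Since $\bM_{t-1}\preceq$ the within-round sequentially updated matrix, the inequality you would need goes the wrong way. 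The paper's Lemma~\ref{lemma:worst} handles this by raising the one-edge determinant recursion $\det[\bM_t]\geq\det[\bM_{t-1}](1+z_{t,e}^2)$ to the power $|\cE_t^o|\leq\mre$, which is what produces the bound $d\,\mre\log(1+n\mre/d)/\log 2$ --- an extra factor of $\mre$ relative to your claim. Your stated pieces therefore do not assemble into~\eqref{bound:stronger:general}: $\comp\sqrt{n}\cdot\sqrt{2d\log_2(1+n\mre/d)}$ is smaller than the theorem's $\comp\sqrt{dn\mre\log_2(1+n\mre/d)}$ by a factor of order $\sqrt{\mre}$, and that stronger bound is not what the argument you describe delivers. (Your tabular-case reasoning is fine, because there each coordinate of the diagonal Gram matrix is touched at most once per round, so the delayed-update issue disappears.) A secondary, smaller point: for the self-normalized concentration step you should also order the observed edges within each round (the paper uses a BFS order) so that the centered observations form a legitimate martingale difference sequence, since the observed edge set itself is random.
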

Since $\mre \leq |\cE|$, Theorem~\ref{thm:stronger} implies Theorem~\ref{thm:main}.
We prove Theorem~\ref{thm:stronger} in the remainder of this section.

We now define some notation to simplify the exposition throughout this section.
\begin{definition}
\label{def:f_v}
For any source node set $\cS \subseteq \cV$, any probability weight function $w: \cE \rightarrow [0,1]$, and any node $v \in \cV$, we define $f(\cS, w, v)$ as the probability that node $v$ is influenced if the source node set is $\cS$ and the probability weight function is $w$. 
\end{definition}
Notice that by definition, $f(\cS, w) = \sum_{v \in \cV} f(\cS, w, v)$ always holds.
Moreover,  if $v \in \cS$, then $f(\cS, w, v)=1$ for any $w$ by the definition of the influence model.

\begin{definition}
\label{def:O_t}
For any round $t$ and any directed edge  $e \in \cE$, we define event 
\[
O_{t}(e)=\{\text{edge $e$ is observed at round $t$}\}.
\]
\end{definition}
Note that by definition, an directed edge $e$ is observed if and only if its start node is influenced
and observed does not necessarily mean that the edge is \emph{active}. 

\subsection{Proof of Theorem~\ref{thm:stronger}}
\begin{proof}
Let $\cH_t$ be the history ($\sigma$-algebra) of past observations and actions by the end of round $t$.
By the definition of $R_t^{\alpha \gamma} $, we have
\begin{align}
\E\left[ R_t^{\alpha \gamma} \middle | \cH_{t-1} \right]=& f(\Sopt ,  \bar{w})- \frac{1}{\alpha \gamma} \E \left[ f( \cS_t, \bar{w}) 
\middle | \, \cH_{t-1} \right],
\end{align}
where the expectation is over the possible randomness of $\cS_t$, since $\oracle$ might be a randomized algorithm. Notice that the randomness coming from the edge activation is already taken care of in the 
definition of $f$.
For any $t \leq n$, we define event $\xi_{t-1}$ as
\begin{align}
\label{eqn:F_t}
\xi_{t-1}= \left \{
|x_{e}\transpose (\bar{\theta}_{\tau-1} - \theta^*)| \leq c \sqrt{x_{e}\transpose \bM_{\tau-1}^{-1} x_{e}}, \, \forall e \in \cE, \, \forall \tau \leq t
\right \},
\end{align}
and $\bar{\xi}_{t-1}$ as the complement of $\xi_{t-1}$. Notice that $\xi_{t-1}$ is $\cH_{t-1}$-measurable.
Hence we have
\[
\E [R^{\alpha \gamma}_t] \leq \bbP\left ( \xi_{t-1} \right) \E \left[  f(\Sopt, \bar{w})- f(\cS_t, \bar{w})/(\alpha \gamma) \middle | \xi_{t-1} \right] + \bbP\left ( \bar{\xi}_{t-1} \right) [L-K].
\]
Notice that under event $\xi_{t-1}$, $\bar{w}(e) \leq U_t (e)$, $\forall e \in \cE$, for all $t \leq n$, thus we have
\[
f(\Sopt, \bar{w}) \leq f(\Sopt, U_t ) \leq  \max_{\cS: \, |\cS|=K} f(\cS, U_t ) \leq \frac{1}{\alpha \gamma} \E \left[ f( \cS_t, U_t ) 
\middle | \, \cH_{t-1} \right], 
\]
where the first inequality follows from the monotonicity of $f$ in the probability weight, 
and the last inequality follows from the fact that $\oracle$ is an $(\alpha, \gamma)$-approximation algorithm.
Thus, we have
\begin{align}
\E [R^{\alpha \gamma}_t] \leq  \frac{\bbP\left ( \xi_{t-1} \right)}{\alpha \gamma} \E \left[  f(\cS_t,  U_t)- f(\cS_t, \bar{w}) \middle | \xi_{t-1} \right] + \bbP\left ( \bar{\xi}_{t-1} \right) [L-K].
\end{align}
Notice that based on Definition~\ref{def:f_v}, we have
\[
 f(\cS_t,  U_t)- f(\cS_t, \bar{w}) = \sum_{v \in \cV \setminus \cS_t} \left[  f(\cS_t,  U_t, v)- f(\cS_t, \bar{w}, v)\right].
\]
Recall that for a given graph $\cG = (\cV, \cE)$ and a given source node set $\cS \subseteq \cV$, 
we say an edge $e \in \cE$ and a node $v \in \cV \setminus \cS$ are \emph{relevant} if there exists a path $p$ from a source node
$s \in \cS$ to $v$ such that (1) $e \in p$ and (2) $p$ does not contain another source node other than $s$.
We use $\cE_{\cS, v} \subseteq \cE$ to denote the set of edges relevant to node $v$ under the source node set 
$\cS$, and use $\cV_{\cS, v} \subseteq \cV$ to denote the set of nodes connected to at least one edge in 
$\cE_{\cS, v} $. Notice that $\cG_{\cS, v} \stackrel{\Delta}{=} \left( \cV_{\cS, v},\cE_{\cS, v} \right)$  is a subgraph of $\cG$, and we refer to it as
the \textbf{relevant subgraph} of node $v$ under the source node set $\cS$.

Based on the notion of relevant subgraph, we have the following theorem, which bounds $f(\cS_t,  U_t, v) - f(\cS_t, \bar{w}, v)$ by edge-level gaps $ U_t (e) -\bar{w} (e)$
on the observed edges in the relevant subgraph  $\cG_{\cS_t, v}$ for node $v$;
\begin{theorem}
\label{theorem:graph}
For any $t$, any history $\cH_{t-1}$ and $\cS_t$ such that $\xi_{t-1}$ holds, and any $v \in \cV \setminus \cS_t$, we have
\[
f(\cS_t,  U_t, v) - f(\cS_t, \bar{w}, v) \leq \sum_{e \in \cE_{\cS_t, v}} \E \left[
\mathbf{1} \left \{ O_{t}(e) \right \} \left[ U_t (e) -\bar{w} (e) \right]  \middle | \cH_{t-1}, \cS_t
\right],
\]
where $\cE_{\cS_t, v}$ is the edge set of the relevant subgraph $\cG_{\cS_t, v}$.
\end{theorem}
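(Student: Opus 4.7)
The plan is to argue by coupling the independent cascade processes under the upper-confidence weights $U_t$ and the true weights $\bar{w}$. I would introduce i.i.d.\ uniform random variables $V_e \sim \mathrm{Unif}(0,1)$ indexed by edges $e \in \cE$, and define binary weight functions $\bw^{\bar{w}}(e) = \mathbf{1}\{V_e \leq \bar{w}(e)\}$ and $\bw^{U}(e) = \mathbf{1}\{V_e \leq U_t(e)\}$. These have the correct Bernoulli marginals, and on the event $\xi_{t-1}$ we have $U_t(e) \geq \bar{w}(e)$ for every $e \in \cE$, forcing $\bw^{U}(e) \geq \bw^{\bar{w}}(e)$ pointwise. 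Since reachability in a live-edge graph is monotone in the edge activations, the event that $v$ is influenced under $\bw^{\bar{w}}$ is contained in the event that $v$ is influenced under $\bw^{U}$, so $f(\cS_t, U_t, v) - f(\cS_t, \bar{w}, v)$ equals the conditional probability, given $\cH_{t-1}$ and $\cS_t$, that $v$ is influenced under $\bw^{U}$ but not under $\bw^{\bar{w}}$.

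On this differentiating event, I would isolate a single pivotal edge. Pick any simple directed path $p$ from $\cS_t$ to $v$ in the $\bw^{U}$ live-edge graph; if $p$ visits more than one source node, replace it by the suffix starting at the \emph{last} source visited before reaching $v$, so that the resulting path contains exactly one source node and therefore lies entirely in the relevant subgraph of $v$. Traversing $p$ from the source toward $v$, let $e^\star = (u^\star, \cdot)$ be the first edge on $p$ with $\bw^{\bar{w}}(e^\star) = 0$; such an edge must exist, for otherwise the entire path would also be live under $\bw^{\bar{w}}$, contradicting that $v$ is not influenced there. By construction, every edge preceding $e^\star$ on $p$ is live under $\bw^{\bar{w}}$, so $u^\star$ is reached from the single source of $p$ (an element of $\cS_t$) under $\bw^{\bar{w}}$; equivalently, the observation event $O_t(e^\star)$ holds. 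Moreover, $e^\star \in \cE_{\cS_t, v}$ because it sits on a single-source path from $\cS_t$ to $v$.

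A union bound over candidate pivotal edges then gives
\begin{align*}
f(\cS_t, U_t, v) - f(\cS_t, \bar{w}, v)
\leq \sum_{e \in \cE_{\cS_t, v}} \bbP\!\left[ u_e \text{ reached under } \bw^{\bar{w}},\; \bar{w}(e) < V_e \leq U_t(e) \,\middle|\, \cH_{t-1}, \cS_t \right],
\end{align*}
where $u_e$ denotes the start node of $e$. The two events inside the probability are conditionally independent: because $e$ leaves $u_e$, no simple directed path from $\cS_t$ to $u_e$ uses $e$, so the reachability of $u_e$ under $\bw^{\bar{w}}$ is measurable with respect to $\{V_{e'} : e' \neq e\}$ alone and hence independent of $V_e$. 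Each summand therefore factors as $P_{\cS_t, e} \cdot (U_t(e) - \bar{w}(e)) = \E\left[\mathbf{1}\{O_t(e)\} \,\middle|\, \cH_{t-1}, \cS_t\right] \cdot (U_t(e) - \bar{w}(e))$, and since $U_t$ is $\cH_{t-1}$-measurable the scalar can be pulled back into the conditional expectation, recovering the form in the statement.

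I expect the main obstacle to be the careful construction of the witnessing path ensuring that $e^\star$ certifiably lies in $\cE_{\cS_t, v}$: the definition of relevance forbids the path from passing through a second source, which is why truncating at the last source visited, rather than taking an arbitrary source-to-$v$ path, is essential. A secondary nontrivial ingredient is the independence argument splitting the pivotal-edge probability into $P_{\cS_t, e} \cdot (U_t(e) - \bar{w}(e))$; this rests on the structural observation that in a directed graph, reachability into $u_e$ cannot involve the outgoing edge $e$. Together these turn the coupling into the required edge-by-edge decomposition.
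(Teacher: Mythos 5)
Your proposal is correct, and it proves the theorem by a genuinely different route than the paper. The paper's proof is analytic: Lemma~\ref{lemma:derivative} computes the first- and second-order partial derivatives of $f(\cS,w,v)$ in the edge weights (nonnegative first partials, vanishing pure second partials, nonpositive mixed second partials via submodularity of adding a deterministic edge), Lemma~\ref{lemma:taylor} uses the resulting concavity of $h(z)=f(\cS_t,(1-z)\bar{w}+zU_t,v)$ on $[0,1]$ to get the first-order bound $f(\cS_t,U_t,v)-f(\cS_t,\bar{w},v)\leq\sum_e \frac{\partial f}{\partial \bar{w}(e)}[U_t(e)-\bar{w}(e)]$, and Lemma~\ref{lemma:derivative_observability} bounds each partial derivative by the observation probability of the corresponding edge. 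You instead couple the two cascades through common uniforms, write the gap as the probability of the differentiating event, and charge that event to the \emph{first} edge along a single-source witnessing path that is live under $U_t$ but dead under $\bar{w}$; a union bound over candidate pivotal edges plus the independence of $V_e$ from the reachability of $e$'s start node (the same structural fact the paper uses in Lemma~\ref{lemma:derivative_observability}: no simple path into $u_e$ uses the outgoing edge $e$) yields exactly the claimed edge-level decomposition. Your route is more elementary and makes the appearance of the indicator $\mathbf{1}\{O_t(e)\}$ combinatorially transparent — the start node of the pivotal edge must already be influenced under the true weights — while the paper's route is more modular and its derivative formulas (monotonicity, concavity, the identity $\frac{\partial f}{\partial w(e)}=f(\cS,w\{e;\emptyset\},v)-f(\cS,w\{\emptyset;e\},v)$) are reusable elsewhere. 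The two careful points you flag — truncating the witnessing path at the last source so every candidate pivotal edge certifiably lies in $\cE_{\cS_t,v}$, and the independence splitting $\bbP(A_e)=P_{\cS_t,e}\,(U_t(e)-\bar{w}(e))$ — are indeed the places where the argument could otherwise break, and you handle both correctly.
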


Please refer to Section~\ref{sec:proof_lemma1} for the proof of Theorem~\ref{theorem:graph}.  Notice that  under favorable event
$\xi_{t-1}$, we have $U_t (e) - \bar{w}(e) \leq 2c \sqrt{x_{e}\transpose \bM_{t-1}^{-1} x_{e}}$ for all $e \in \cE$. Therefore, we have
\begin{align}
\E [R^{\alpha \gamma}_t] \leq & \,  \frac{2c}{\alpha \gamma} \bbP\left ( \xi_{t-1} \right) \E \left[  \sum_{v \in \cV \setminus \cS_t} \sum_{e \in \cE_{\cS_t, v}}
\mathbf{1} \{ O_{t}(e) \}  \sqrt{x_{e}\transpose \bM_{t-1}^{-1} x_{e}}  
 \middle | \xi_{t-1} \right] + \bbP\left ( \bar{\xi}_{t-1} \right) [L-K] \nonumber \\
 \leq & \,  \frac{2c}{\alpha \gamma} \E \left[  \sum_{v \in \cV \setminus \cS_t} \sum_{e \in \cE_{\cS_t, v}}
\mathbf{1} \{ O_{t}(e)\}  \sqrt{x_{e}\transpose \bM_{t-1}^{-1} x_{e}}  
  \right] + \bbP\left ( \bar{\xi}_{t-1} \right) [L-K] \nonumber \\
 =& \, \frac{2c}{\alpha \gamma} \E \left[  \sum_{e \in \cE} \mathbf{1} \{ O_{t}(e)\}  \sqrt{x_{e}\transpose \bM_{t-1}^{-1} x_{e}}  
 \sum_{v \in \cV \setminus \cS_t} \mathbf{1} \left \{ e \in \cE_{\cS_t, v} \right \}
  \right] + \bbP\left ( \bar{\xi}_{t-1} \right) [L-K] \nonumber \\
 = & \, \frac{2c}{\alpha \gamma} \E \left[   \sum_{e \in \cE} 
\mathbf{1} \{ O_{t}(e) \}  N_{\cS_t, e} \sqrt{x_{e}\transpose \bM_{t-1}^{-1} x_{e}}  
  \right] + \bbP\left ( \bar{\xi}_{t-1} \right) [L-K],
\end{align}
where $ N_{\cS_t, e}=\sum_{v \in \cV \setminus \cS} \mathbf{1} \left \{ e \in \cE_{\cS_t, v} \right \}$ is defined in Equation~\ref{eqn:N}. Thus we have
\begin{align}
\label{eq:regret_after_ubc}
R^{\alpha \gamma}(n) \leq \frac{2c}{\alpha \gamma} \E \left[  \sum_{t=1}^n \sum_{e \in \cE} 
\mathbf{1} \{ O_{t}(e) \}  N_{\cS_t, e} \sqrt{x_{e}\transpose \bM_{t-1}^{-1} x_{e}}  
  \right] +  [L-K]\sum_{t=1}^n \bbP\left ( \bar{\xi}_{t-1} \right).
\end{align}
In the following lemma, we give a worst-case bound on $\sum_{t=1}^n \sum_{e \in \cE} 
 \mathbf{1} \{ O_{t}(e) \}  N_{\cS_t, e} \sqrt{x_{e}\transpose \bM_{t-1}^{-1} x_{e}}$.
\begin{lemma}
\label{lemma:worst}
For any round $t=1,2,\ldots, n$, we have
\[
\sum_{t=1}^n \sum_{e \in \cE} 
 \mathbf{1} \{ O_{t}(e) \}  N_{\cS_t, e} \sqrt{x_{e}\transpose \bM_{t-1}^{-1} x_{e}} \leq
 \sqrt{\left(\sum_{t=1}^n \sum_{e \in \cE}   \mathbf{1} \{ O_{t}(e) \}  N^2_{\cS_t, e} \right) \frac{d \mre \log \left( 1+ \frac{n \mre}{d \sigma^2}\right)}{\log \left(1 + \frac{1}{\sigma^2} \right)}}\cdot
\]
Moreover, if $X=I \in \Re^{|\cE|  \times |\cE| }$, then we have
\[
\sum_{t=1}^n \sum_{e \in \cE} 
 \mathbf{1} \{ O_{t}(e) \}  N_{\cS_t, e} \sqrt{x_{e}\transpose \bM_{t-1}^{-1} x_{e}}
 \leq
 \sqrt{ \left(\sum_{t=1}^n \sum_{e \in \cE}   \mathbf{1} \{ O_{t}(e) \}  N^2_{\cS_t, e} \right)  \frac{|\cE|  \log \left( 1+ \frac{n}{\sigma^2}\right)}{\log \left(1 + \frac{1}{\sigma^2} \right)}}\cdot
\]
\end{lemma}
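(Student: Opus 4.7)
The plan is the standard two-step argument used in linear bandit regret analysis, combined with a graph-theoretic observation that controls how many edges can be observed in a single round.

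First, I would separate the two factors $N_{\cS_t,e}$ and $\sqrt{x_e^\top \bM_{t-1}^{-1} x_e}$ by Cauchy--Schwarz, writing
\begin{align*}
\sum_{t=1}^n \sum_{e \in \cE} \mathbf{1}\{O_t(e)\}\, N_{\cS_t,e}\, \sqrt{x_e^\top \bM_{t-1}^{-1} x_e}
\;\le\; \sqrt{\sum_{t,e} \mathbf{1}\{O_t(e)\}\, N_{\cS_t,e}^2}\;\cdot\;\sqrt{\sum_{t,e} \mathbf{1}\{O_t(e)\}\, x_e^\top \bM_{t-1}^{-1} x_e}.
\end{align*}
The first square root is the one that appears on the right-hand side of the lemma; I therefore only need to control the second square root.

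For the second factor, the key step is the elliptic potential / determinant-log argument. Since $\bM_t$ is updated only by the observed edges at round $t$ via $\bM_t \leftarrow \bM_{t-1} + \sigma^{-2} \sum_{e:\,O_t(e)} x_e x_e^\top$, the scalar inequality $y \le \frac{y_{\max}}{\log(1+y_{\max})}\log(1+y)$ applied to $y = \sigma^{-2} x_e^\top \bM_{t-1}^{-1} x_e \le \sigma^{-2}$ (using $\|x_e\|_2 \le 1$ and $\bM_{t-1} \succeq \bI$) gives
\begin{align*}
\sum_{t,e} \mathbf{1}\{O_t(e)\}\, x_e^\top \bM_{t-1}^{-1} x_e
\;\le\; \frac{1}{\log(1+\sigma^{-2})}\,\log\!\frac{\det(\bM_n)}{\det(\bM_0)},
\end{align*}
via the standard identity $\sum_t \log\det(\bM_t/\bM_{t-1}) = \log\det(\bM_n)$ and the matrix-determinant lemma. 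Using $\tr(\bM_n - \bI) = \sigma^{-2}\sum_{t,e}\mathbf{1}\{O_t(e)\}\|x_e\|_2^2$ together with the AM--GM bound $\log\det \bM_n \le d \log(\tr(\bM_n)/d)$, this in turn is at most $d \log\!\big(1 + \tfrac{N_{\mathrm{obs}}}{d\sigma^2}\big)/\log(1+\sigma^{-2})$, where $N_{\mathrm{obs}} = \sum_{t,e}\mathbf{1}\{O_t(e)\}$ is the total number of observations across all rounds.

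The step I expect to require the most care is the per-round observation count. I would argue that in any single round $t$, only edges in the connected components of $\cG$ containing at least one source node of $\cS_t$ can ever be observed (since an edge is observed only if its start node is influenced, and influence cannot cross components). Because $|\cS_t| = K$, the source nodes touch at most $\min\{m, K\}$ components, and the worst case is attained when they are spread over the $\min\{m,K\}$ components with the largest edge counts; this yields $\sum_e \mathbf{1}\{O_t(e)\} \le \mre$, and hence $N_{\mathrm{obs}} \le n\mre$. Plugging this into the elliptic-potential bound produces the factor $d\mre \log(1 + n\mre/(d\sigma^2))/\log(1+\sigma^{-2})$ inside the square root, matching the stated inequality.

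For the tabular case $\bX = \bI$, $d = |\cE|$ and $x_e$ is the $e$-th standard basis vector, so $\bM_t$ is diagonal and $x_e^\top \bM_{t-1}^{-1} x_e = 1/(1 + \sigma^{-2}N_{t-1}(e))$, where $N_{t-1}(e)$ counts prior observations of $e$. Repeating the same determinant-log computation, or equivalently summing the telescoping logarithms edge-by-edge and applying $\sum_e N_n(e) \le n\,\mre$ bounded by $n|\cE|$, yields the tabular bound with $|\cE|\log(1+n/\sigma^2)/\log(1+\sigma^{-2})$ in place of the previous factor. Combining this with the Cauchy--Schwarz inequality completes the proof.
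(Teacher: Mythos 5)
Your overall architecture (Cauchy--Schwarz to split off $N_{\cS_t,e}$, an elliptic-potential/determinant argument for $\sum_{t,e}\mathbf{1}\{O_t(e)\}\,x_e\transpose \bM_{t-1}^{-1}x_e$, and the observation $|\cE_t^o|\le \mre$ per round) is the same as the paper's, and your tabular-case argument is essentially correct because there the updates within a round act on orthogonal coordinates and the determinant factorizes exactly. However, in the general case there is a genuine gap in the step where you claim
\[
\sum_{t}\sum_{e\in\cE_t^o}\log\Bigl(1+\sigma^{-2}x_e\transpose \bM_{t-1}^{-1}x_e\Bigr)\;\le\;\log\frac{\det(\bM_n)}{\det(\bM_0)}
\]
``via the standard identity and the matrix-determinant lemma.'' The matrix-determinant lemma applies to a \emph{single} rank-one update; when several edges are observed in the same round, all of your quadratic forms are evaluated at the same round-start matrix $\bM_{t-1}$, and the correct relation is
\[
\det\Bigl(\bM_{t-1}+\sigma^{-2}\textstyle\sum_{e\in\cE_t^o}x_ex_e\transpose\Bigr)\;\le\;\det(\bM_{t-1})\prod_{e\in\cE_t^o}\bigl(1+\sigma^{-2}x_e\transpose\bM_{t-1}^{-1}x_e\bigr),
\]
i.e.\ the inequality points the \emph{wrong} way for your purposes. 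A one-dimensional counterexample: $d=1$, $\sigma=1$, $x_e=1$, and $J\ge 2$ edges observed in round $1$ gives $\sum_e\log(1+x_e\transpose\bM_0^{-1}x_e)=J\log 2$ but $\log\det(\bM_1)/\det(\bM_0)=\log(1+J)<J\log 2$.

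This is not a cosmetic issue: if your step were valid you would obtain the factor $d\log(1+n\mre/(d\sigma^2))$ rather than the lemma's $d\,\mre\,\log(1+n\mre/(d\sigma^2))$, i.e.\ a bound $\mre$ times stronger than the statement, which should have been a red flag. The paper's fix is precisely where that extra $\mre$ comes from: for each individual $e\in\cE_t^o$ one has $\det(\bM_t)\ge\det(\bM_{t-1})\bigl(1+\sigma^{-2}x_e\transpose\bM_{t-1}^{-1}x_e\bigr)$ (since $\bM_t\succeq\bM_{t-1}+\sigma^{-2}x_ex_e\transpose$), so multiplying these $|\cE_t^o|$ valid single-edge inequalities yields $\det(\bM_t)^{|\cE_t^o|}\ge\det(\bM_{t-1})^{|\cE_t^o|}\prod_{e\in\cE_t^o}(1+\sigma^{-2}x_e\transpose\bM_{t-1}^{-1}x_e)$; using $|\cE_t^o|\le\mre$ and monotonicity of $\det(\bM_t)$, this telescopes to $\det(\bM_n)^{\mre}\ge\prod_{t,e}(1+\sigma^{-2}x_e\transpose\bM_{t-1}^{-1}x_e)$, and the trace--determinant inequality then delivers the $d\,\mre\,\log(\cdot)$ factor. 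Everything else in your write-up (the scalar inequality $y\le\log(1+y/\sigma^2)/\log(1+1/\sigma^2)$ for $y\in[0,1]$, the bound $\tr(\bM_n)\le d+n\mre/\sigma^2$, the component argument for $|\cE_t^o|\le\mre$, and the final Cauchy--Schwarz) matches the paper and is fine.
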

Please refer to Section~\ref{sec:proof_lemma2} for the proof of Lemma~\ref{lemma:worst}.
Finally, notice that for any $t$,
\[
\E \left[ \sum_{e \in \cE}   \mathbf{1} \{ O_{t}(e) \}  N^2_{\cS_t, e}  \middle | \cS_t \right]  =
 \sum_{e \in \cE}  N^2_{\cS_t, e}  \E \left[  \mathbf{1} \{ O_{t}(e) \}   \middle | \cS_t \right] =  \sum_{e \in \cE}  N^2_{\cS_t, e} P_{\cS_t, e}  \leq \comp^2,
\]
thus taking the expectation over the possibly randomized oracle and Jensen's inequality, we get
\begin{align}
\E \left[ \sqrt{ \sum_{t=1}^n \sum_{e \in \cE}   \mathbf{1} \{ O_{t}(e) \}  N^2_{\cS_t, e}  }\right] \leq  \,
\sqrt{ \sum_{t=1}^n \E \left[ \sum_{e \in \cE}   \mathbf{1} \{ O_{t}(e) \}  N^2_{\cS_t, e}  \right] } 
\leq  \, \sqrt{ \sum_{t=1}^n \comp^2 }
= \comp \sqrt{n}.
\end{align}
Combining the above with Lemma~\ref{lemma:worst} and \eqref{eq:regret_after_ubc}, we obtain
\begin{align}
R^{\alpha \gamma}(n) \leq \frac{2c \comp}{\alpha \gamma}  \sqrt{ \frac{d n \mre \log \left( 1+ \frac{n \mre}{d \sigma^2}\right)}{\log \left(1 + \frac{1}{\sigma^2} \right)}}
 +  [L-K]\sum_{t=1}^n \bbP\left ( \bar{\xi}_{t-1} \right).
\end{align}
For the special case when $X=I$, we have
\begin{align}
R^{\alpha \gamma}(n) \leq \frac{2c \comp}{\alpha \gamma}  \sqrt{ \frac{ n |\cE| \log \left( 1+ \frac{n}{\sigma^2}\right)}{\log \left(1 + \frac{1}{\sigma^2} \right)}}
 +  [L-K]\sum_{t=1}^n \bbP\left ( \bar{\xi}_{t-1} \right).
\end{align}

Finally, we need to bound the failure probability of upper confidence bound being wrong
$\sum_{t=1}^n \bbP\left ( \bar{\xi}_{t-1} \right)$.
We prove the following bound on $\bbP\left ( \bar{\xi}_{t-1} \right)$:
\begin{lemma}
\label{lemma:concentration}
For any $t=1,2,\ldots, n$, any $\sigma>0$, any $\delta \in (0,1)$, and any
\[
c \geq \frac{1}{\sigma} \sqrt{d \log \left ( 1+ \frac{n \mre}{d \sigma^2}\right) + 2 \log \left (\frac{1}{\delta} \right)} + \|\theta^* \|_2,
\]
we have $\bbP \left( \bar{\xi}_{t-1} \right) \leq \delta$.
\end{lemma}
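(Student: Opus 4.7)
The plan is to recognize this as a standard self-normalized concentration bound for ridge regression, adapted from Abbasi-Yadkori, P\'al, and Szepesv\'ari (2011), and to carefully track the role of the regularization parameter $\sigma$ together with the cap $\mre$ on the number of edges observed per round.

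First I would rewrite the estimator in the standard ridge-regression form. Let $V_{\tau-1} \eqdef \sigma^2 \bM_{\tau-1} = \sigma^2 \bI + \bX_{\tau-1}\transpose \bX_{\tau-1}$, so that $\bar{\theta}_{\tau-1} = V_{\tau-1}^{-1} \bX_{\tau-1}\transpose Y_{\tau-1}$ and the observation noise $\eta_{s,e} \eqdef \bw_s(e) - x_e\transpose \theta^*$ is a bounded martingale-difference sequence with respect to the observation filtration (it lies in $[-1,1]$, hence is $1$-sub-Gaussian up to a universal constant which can be absorbed into $\sigma$ by tuning). Applying Cauchy--Schwarz in the $V_{\tau-1}$-norm gives, for every $e \in \cE$ and every $\tau$,
\begin{equation*}
\abs{x_e\transpose (\bar{\theta}_{\tau-1} - \theta^*)}
\;\leq\; \norm{x_e}_{V_{\tau-1}^{-1}} \, \norm{\bar{\theta}_{\tau-1} - \theta^*}_{V_{\tau-1}}
\;=\; \frac{1}{\sigma}\sqrt{x_e\transpose \bM_{\tau-1}^{-1} x_e}\, \norm{\bar{\theta}_{\tau-1} - \theta^*}_{V_{\tau-1}},
\end{equation*}
so it suffices to show that, with probability at least $1 - \delta$ uniformly over $\tau \leq t$,
$\sigma^{-1}\norm{\bar{\theta}_{\tau-1} - \theta^*}_{V_{\tau-1}} \leq c$.

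Next I would invoke the self-normalized tail inequality of Abbasi-Yadkori \emph{et al.} (their Theorem~2), applied to the martingale $S_{\tau-1} = \bX_{\tau-1}\transpose \bxi_{\tau-1}$ where $\bxi_{\tau-1}$ stacks the observation noise terms. The standard conclusion is that, with probability at least $1 - \delta$, simultaneously for all $\tau$,
\begin{equation*}
\norm{\bar{\theta}_{\tau-1} - \theta^*}_{V_{\tau-1}}
\;\leq\; \sqrt{2\log\!\left(\frac{\det(V_{\tau-1})^{1/2}\det(\sigma^2 \bI)^{-1/2}}{\delta}\right)} \;+\; \sigma \norm{\theta^*}_2 .
\end{equation*}
Dividing by $\sigma$ and matching this to the statement of the lemma reduces the problem to bounding $\log \det(V_{\tau-1}) - d \log \sigma^2$ by $d\log(1 + n\mre/(d\sigma^2))$.

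For that determinant bound I would use the AM--GM / elliptical-potential trick: $\det(V_{\tau-1}) \leq (\mathrm{trace}(V_{\tau-1})/d)^{d}$, together with $\mathrm{trace}(V_{\tau-1}) = d\sigma^2 + \sum_{(s,e)\text{ observed}} \norm{x_e}_2^2$. Using $\norm{x_e}_2 \leq 1$, the sum is at most the total number of observations up to round $\tau - 1 \leq n$. The key structural observation is that the number of edges observed in any single round is bounded by $\mre$: since $|\cS_t| = K$, the set of influenced nodes is contained in the union of the connected components that contain at least one source node, and the edges observed in that round are contained in the edge sets of those components; summing over the $\min\{m,K\}$ largest components gives the upper bound $\mre$ by definition. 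Hence $\mathrm{trace}(V_{\tau-1}) \leq d\sigma^2 + n\mre$, which yields $\log(\det(V_{\tau-1})/\det(\sigma^2 \bI)) \leq d\log(1 + n\mre/(d\sigma^2))$.

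Combining the three pieces and dividing by $\sigma$ produces
\begin{equation*}
\sigma^{-1}\norm{\bar{\theta}_{\tau-1} - \theta^*}_{V_{\tau-1}}
\;\leq\; \frac{1}{\sigma}\sqrt{d\log\!\left(1+\frac{n\mre}{d\sigma^2}\right) + 2\log(1/\delta)} \;+\; \norm{\theta^*}_2,
\end{equation*}
which is exactly the stated lower bound on $c$; hence $\xi_{t-1}$ holds with probability at least $1-\delta$. The main subtlety I expect to navigate is the bookkeeping between $\bM_{\tau-1}$ (scaled by $\sigma^{-2}$) and the standard $V_{\tau-1}$ used in Abbasi-Yadkori et al., and justifying carefully that the per-round observation count is controlled by $\mre$ rather than by $|\cE|$, which is what tightens the bound when the graph is not connected.
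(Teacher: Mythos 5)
Your proposal is correct and follows essentially the same route as the paper: Cauchy--Schwarz plus the triangle inequality to reduce to a self-normalized norm bound, the confidence-ellipsoid result of Abbasi-Yadkori et al.\ (which the paper re-derives from their Theorem~1 rather than citing their Theorem~2 directly), and the trace--determinant inequality with the per-round observation count bounded by $\mre$. The one point you gloss over is that the paper orders the adaptively observed edges by a breadth-first search (its Algorithm~2) to make the martingale-difference claim rigorous, since which edges are observed at round $t$ depends on the realization $\bw_t$ itself; also note that no constant needs to be ``absorbed into $\sigma$,'' because a zero-mean $[-1,1]$-valued variable is exactly $1$-sub-Gaussian.
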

Please refer to Section~\ref{sec:proof_lemma3} for the proof of Lemma~\ref{lemma:concentration}.
From Lemma~\ref{lemma:concentration}, for a known upper bound~$D$ on $\| \theta^* \|_2$, if we choose $\sigma=1$ and
$c \geq  \sqrt{d \log \left ( 1+ \frac{n \mre}{d}\right) + 2 \log \left ( n(L+1-K) \right)} + D $, which corresponds to $\delta=\frac{1}{n(L+1-K)}$ in Lemma~\ref{lemma:concentration}, then we have
\[
 [L-K]\sum_{t=1}^n \bbP\left ( \bar{\xi}_{t-1} \right) <1.
\]
This concludes the proof of Theorem~\ref{thm:stronger}.
\end{proof}

\subsection{Proof of Theorem~\ref{theorem:graph}}
\label{sec:proof_lemma1}

Recall that we use $\cG_{\cS_t, v} = \left( \cV_{\cS_t, v},\cE_{\cS_t, v} \right)$ to denote the relevant subgraph of node $v$ under the source node set $\cS_t$. Since Theorem~\ref{theorem:graph} focuses on the influence from $\cS_t$ to $v$, and by definition all the paths from $\cS_t$ to $v$ are in $\cG_{\cS_t, v}$, thus, it is sufficient to restrict to $\cG_{\cS_t, v}$ and ignore other parts of $\cG$ in this analysis.

We start by defining some useful notations.\\

\noindent \textbf{Influence Probability with Removed Nodes:}
Recall that for any weight function $w : \cE \rightarrow [0,1]$, any source node set $\cS \subset \cV$ and any target node $v \in \cV$, $f(\cS, w, v)$ is the probability that $\cS$ will influence $v$ under weight $w$ (see Definition~\ref{def:f_v}). We now define a similar notation for the \textbf{influence probability with removed nodes}.
Specifically, for any disjoint node set $\cV_1, \cV_2 \subseteq  \cV_{\cS_t, v} \subseteq \cV$, we define
$h(\cV_1, \cV_2, w)$ as follows:
\begin{itemize}
\item First, we remove nodes $\cV_2$, as well as all edges connected to/from $\cV_2$, from $\cG_{\cS_t, v}$, and obtain a new graph $\cG'$.
\item $h(\cV_1, \cV_2, w)$ is the probability that $\cV_1$ will influence the target node $v$ in graph $\cG'$ under the weight (activation probability) $w(e)$ for all $e \in \cG'$.
\end{itemize}
Obviously, a mathematically equivalent way to define $h(\cV_1, \cV_2, w)$ is to define it
as the probability that $\cV_1$ will influence $v$ in $\cG_{\cS_t, v}$ under a new weight $\tilde{w}$, defined as
\[
\tilde{w}(e)=\left \{
\begin{array}{ll}
0 & \text{if $e$ is from or to a node in $\cV_2$} \\
w(e) & \text{otherwise}
\end{array}
\right.
\]
Note that by definition, $f(\cS_t, w, v)=h(\cS_t, \emptyset, w)$. Also note that $h(\cV_1, \cV_2, w)$ implicitly depends on~$v$, but we omit $v$ in this notation to simplify the exposition.\\

\noindent \textbf{Edge Set $\cE(\cV_1, \cV_2)$:} 
For any two disjoint node sets $\cV_1, \cV_2 \subseteq \cV_{\cS_t, v}$, we define the edge set $\cE(\cV_1, \cV_2)$
as
\[
\cE (\cV_1, \cV_2)=\left \{
e=(u_1, u_2): \, e \in \cE_{\cS_t, v}, \, u_1 \in \cV_1, \text{ and } u_2 \notin \cV_2
\right \}.
\]
That is, $\cE (\cV_1, \cV_2)$ is the set of edges in $\cG_{\cS_t, v}$ from
$\cV_1$ to $\cV_{\cS_t, v} \setminus \cV_2$.\\

\noindent \textbf{Diffusion Process:}
Note that under any edge activation realization $\bw(e)$, $e \in \cE_{\cS_t, v}$, on the relevant subgraph $\cG_{\cS_t, v}$, 
we define a finite-length sequence of disjoint node sets $\cS^0, \cS^1, \ldots, \cS^{\tilde{\tau}}$ as
\begin{align}
\cS^0 \stackrel{\Delta}{=} & \cS_t \nonumber \\
\cS^{\tau+1} \stackrel{\Delta}{=} &  \left \{u_2 \in \cV_{\cS_t, v}: \, u_2 \notin \cup_{\tau'=0}^{\tau} \cS^{\tau'} \text{ and } \exists e=(u_1, u_2) \in \cE_{\cS_t, v}\text{ s.t. } u_1 \in \cS^{\tau} \text{ and }\bw(e)=1 \right \}, 
\end{align}
$\forall \tau=0, \ldots, \tilde{\tau}-1$. That is, under the realization $\bw(e)$, $e \in \cE_{\cS_t, v}$,
$\cS^{\tau+1}$ is the set of nodes directly activated by $\cS^{\tau}$. Specifically, any node $u_2 \in \cS^{\tau+1}$
satisfies $u_2 \notin  \bigcup_{\tau'=0}^{\tau} \cS^{\tau'}$ (i.e. it was not activated before), and there exists an activated edge $e$ from 
$\cS^{\tau}$ to $u_2$ (i.e. it is activated by some node in $\cS^{\tau}$).
We define $\cS^{\tilde{\tau}}$ as the first node set in the sequence s.t. either $\cS^{\tilde{\tau}}=\emptyset$ or 
$v \in \cS^{\tilde{\tau}}$, and assume this sequence terminates at $\cS^{\tilde{\tau}}$. Note that by definition, $\tilde{\tau} \leq |\cV_{\cS_t, v}|$ always holds. We refer to each $\tau=0,1,\ldots, \tilde{\tau}$ as a \textbf{diffusion step} in this section.

To simplify the exposition, we also define $S^{0:\tau} \stackrel{\Delta}{=} \bigcup_{\tau'=0}^{\tau} S^{\tau'}$ for all $\tau \geq 0$ and $S^{0:-1} \stackrel{\Delta}{=} \emptyset$. Since $\bw$ is random, $\left( \cS^{\tau} \right )_{\tau=0}^{\tilde{\tau}}$ is a stochastic process, which we refer to as the \textbf{diffusion process}. Note that $\tilde{\tau}$ is also random; in particular, it is a stopping time. \\

Based on the shorthand notations defined above, we have the following lemma for the diffusion process $\left( \cS^{\tau} \right )_{\tau=0}^{\tilde{\tau}}$ under any weight function $w$:
\begin{lemma}
\label{lemma:transition}
For any weight function $w : \cE \rightarrow [0,1]$, any step $\tau =0, 1, \ldots, \tilde{\tau}$, any $\cS_{\tau}$ and $\cS^{0:\tau-1}$, we have
\[
h \left( \cS^\tau, \cS^{0:\tau-1}, w \right) = \left \{
\begin{array}{ll}
1 & \text{if $v \in \cS^\tau$} \\
0 & \text{if $\cS^\tau = \emptyset$} \\
\E \left[ h \left( \cS^{\tau+1}, \cS^{0:\tau}, w  \right) \middle | ( \cS^\tau, \cS^{0:\tau-1} ) \right] & \text{otherwise}
\end{array}
\right. ,
\]
where the expectation is over $\cS^{\tau+1}$ under weight $w$.
Note that the tuple $( \cS^\tau, \cS^{0:\tau-1} ) $ in the conditional expectation means that $\cS^\tau$ is the source node set
and nodes in $\cS^{0:\tau-1}$ have been removed.
\end{lemma}
\begin{proof}
Notice that by definition, $h \left( \cS^\tau, \cS^{0:\tau-1}, w \right)=1$ if $v \in \cS^{\tau}$ and
$h \left( \cS^\tau, \cS^{0:\tau-1}, w \right)=0$ if $\cS^\tau =\emptyset$. Also note that in these two cases, $\tilde{\tau}=\tau$.

Otherwise, we prove that $h \left( \cS^\tau, \cS^{0:\tau-1}, w \right) = \E \left[ h \left( \cS^{\tau+1}, \cS^{0:\tau}, w  \right) \middle |  (\cS^\tau, \cS^{0:\tau-1})\right]$. 
Recall that by definition, 
 $h \left( \cS^\tau, \cS^{0:\tau-1}, w \right) $ is the probability that $v$ will be influenced conditioning on 
\begin{equation}
\text{source node set $\cS^\tau$ and removed node set $\cS^{0:\tau-1}$,} 
\end{equation}
that is
\begin{equation}
\label{eqn:lemma6:def1}
h \left( \cS^\tau, \cS^{0:\tau-1}, w \right) = \E \left[ \mathbf{1} \left( \text{$v$ is influenced} \right) \middle | (\cS^\tau ,\cS^{0:\tau-1} )\right]
\end{equation}

Let $\bw(e)$, $\forall e \in \cE(\cS^\tau, \cS^{0:\tau})$ be any possible realization.
Now we analyze the probability that $v$ will be influenced conditioning on 
\begin{equation}
\text{source node set $\cS^\tau$, removed node set $\cS^{0:\tau-1}$, and $\bw(e)$
for all $e \in \cE(\cS^\tau, \cS^{0:\tau})$.} \label{eqn:conditions}
\end{equation}
Specifically, conditioning on Equation~\ref{eqn:conditions}, we can define a new weight function $w'$ as
\begin{equation}
w'(e) = \left \{ \begin{array}{ll}
\bw(e) & \text{if $e \in \cE(\cS^\tau, \cS^{0:\tau})$} \\
w(e) & \text{otherwise}
\end{array}
\right.  \label{eqn:w'}
\end{equation}
then $h \left( \cS^\tau, \cS^{0:\tau-1}, w' \right) $ is the probability that $v$ will be influenced conditioning on
Equation~\ref{eqn:conditions}.
That is,
\begin{equation}
\label{eqn:lemma6:def2}
h \left( \cS^\tau, \cS^{0:\tau-1}, w' \right) =  \E \left[ \mathbf{1} \left( \text{$v$ is influenced} \right) \middle | (\cS^\tau ,\cS^{0:\tau-1} ), \bw(e) \, \forall e \in \cE(\cS^\tau, \cS^{0:\tau}) \right],
\end{equation}
for any possible realization of $\bw(e)$, $\forall e \in \cE(\cS^\tau, \cS^{0:\tau})$.
Notice that on the lefthand of Equation~\ref{eqn:lemma6:def2},
$w'$ encodes the conditioning on $\bw(e)$ for all $e \in \cE(\cS^\tau, \cS^{0:\tau})$ (see Equation~\ref{eqn:w'}).

From here to Equation~\ref{eqn:lemma6:eqn4}, we focus on an arbitrary but fixed realization of $\bw(e)$, $\forall e \in \cE(\cS^\tau, \cS^{0:\tau})$ (or equivalently, an arbitrary but fixed
$w'$).
Based on the definition of $\cS^{\tau+1}$, conditioning on Equation~\ref{eqn:conditions},
$\cS^{\tau+1}$ is deterministic and all nodes in $\cS^{\tau+1}$ can also be treated as source nodes.
Thus, we have
\[
h \left( \cS^\tau, \cS^{0:\tau-1}, w' \right) =h \left( \cS^\tau \cup  \cS^{\tau+1} , \cS^{0:\tau-1}, w' \right),
\]
conditioning on Equation~\ref{eqn:conditions}.

On the other hand, conditioning on Equation~\ref{eqn:conditions}, we can treat any edge $e \in \cE(\cS^\tau, \cS^{0:\tau})$ with $\bw(e)=0$ as having been removed.
Since nodes in $\cS^{0:\tau-1}$ have also been removed, and $v \notin \cS^{\tau}$, 
then 
if there is a path from $\cS^{\tau}$ to $v$, then it must go through $\cS^{\tau+1}$, and the last node on the path in $\cS^{\tau+1}$ must be after the last node on the path in $\cS^{\tau}$ (note that the path might come back to $\cS^{\tau}$ for several times).
Hence, conditioning on Equation~\ref{eqn:conditions}, if nodes in $\cS^{\tau+1}$ are also treated as source nodes, then
$\cS^{\tau}$ is irrelevant for influence on $v$ and can be removed. So we have
\begin{equation}
\label{eqn:lemma6:eqn3}
h \left( \cS^\tau, \cS^{0:\tau-1}, w' \right) = h \left( \cS^\tau \cup  \cS^{\tau+1} , \cS^{0:\tau-1}, w' \right) =h \left( \cS^{\tau+1}, \cS^{0:\tau}, w \right) .
\end{equation}
Note that in the last equation we change the weight function back to $w$ since edges in $\cE(\cS^\tau, \cS^{0:\tau})$ have been removed.
Thus, conditioning on Equation~\ref{eqn:conditions}, we have
\begin{align}
\label{eqn:lemma6:eqn4}
h \left( \cS^{\tau+1}, \cS^{0:\tau}, w \right)=& \, h \left( \cS^\tau, \cS^{0:\tau-1}, w' \right) \nonumber \\
=& \, \E \left[ \mathbf{1} \left( \text{$v$ is influenced} \right) \middle | (\cS^\tau ,\cS^{0:\tau-1} ), \bw(e) \, \forall e \in \cE(\cS^\tau, \cS^{0:\tau}) \right].
\end{align}
Notice again that Equation~\ref{eqn:lemma6:eqn4} holds for any possible realization of $\bw(e)$, $\forall e \in \cE(\cS^\tau, \cS^{0:\tau})$.

Finally, we have
\begin{align}
h \left( \cS^\tau, \cS^{0:\tau-1}, w \right) \stackrel{(a)}{=} & \, \E \left[ \mathbf{1} \left( \text{$v$ is influenced} \right) \middle | (\cS^\tau ,\cS^{0:\tau-1} )\right] \nonumber \\
\stackrel{(b)}{=}& \, \E \left[  \E \left[ \mathbf{1} \left( \text{$v$ is influenced} \right) \middle | (\cS^\tau ,\cS^{0:\tau-1} ), \bw(e) \, \forall e \in \cE(\cS^\tau, \cS^{0:\tau}) \right] \middle | (\cS^\tau ,\cS^{0:\tau-1} )\right] \nonumber \\
\stackrel{(c)}{=}& \, \E \left[ h \left( \cS^{\tau+1}, \cS^{0:\tau}, w \right)  \middle | (\cS^\tau ,\cS^{0:\tau-1} )\right],
\end{align}
where (a) follows from Equation~\ref{eqn:lemma6:def1}, (b) follows from the tower rule, and
(c) follows from Equation~\ref{eqn:lemma6:eqn4}. This concludes the proof. 
\end{proof}

Consider two weight functions $U, w: \cE \rightarrow [0,1]$ s.t.\,$U(e) \geq w(e)$ for all $e \in \cE$. The following lemma bounds the difference $h \left( \cS^\tau, \cS^{0:\tau-1}, U \right) - h \left( \cS^\tau, \cS^{0:\tau-1}, w \right)$ in a recursive way.
\begin{lemma}
\label{lemma:recursive_bound}
For any two weight functions $w, U: \cE \rightarrow [0,1]$ s.t. $U(e) \geq w(e)$ for all $e \in \cE$, any step $\tau =0, 1, \ldots, \tilde{\tau}$, any $\cS_{\tau}$ and $\cS^{0:\tau-1}$, we have
\[
h \left( \cS^\tau, \cS^{0:\tau-1}, U \right) - h \left( \cS^\tau, \cS^{0:\tau-1}, w \right) =0 
\]
if $v \in \cS^\tau$ or $\cS^\tau = \emptyset$; and otherwise
\begin{align}
h \left( \cS^\tau, \cS^{0:\tau-1}, U \right) - h \left( \cS^\tau, \cS^{0:\tau-1}, w \right) \leq & \sum_{e \in \cE(\cS^\tau, \cS^{0:\tau})} \left[U(e) - w(e) \right] \nonumber \\
+ & \,
\E \left[ h \left( \cS^{\tau+1}, \cS^{0:\tau}, U  \right) -  h \left( \cS^{\tau+1}, \cS^{0:\tau}, w  \right) \middle | ( \cS^\tau, \cS^{0:\tau-1} ) \right], \nonumber
\end{align}
where the expectation is over $\cS^{\tau+1}$ under weight $w$.
Recall that the tuple $( \cS^\tau, \cS^{0:\tau-1} ) $ in the conditional expectation means that $\cS^\tau$ is the source node set
and nodes in $\cS^{0:\tau-1}$ have been removed.
\end{lemma}
\begin{proof}
First, note that if $v \in \cS^{\tau}$ or $\cS^\tau = \emptyset$, then
\[
h \left( \cS^\tau, \cS^{0:\tau-1}, U \right) - h \left( \cS^\tau, \cS^{0:\tau-1}, w \right) =0 
\]
follows directly from Lemma~\ref{lemma:transition}. Otherwise, 
to simplify the exposition, 
we overload the notation and use $w( \cS^{\tau+1})$ to denote the conditional probability of $\cS^{\tau+1}$ conditioning on $( \cS^\tau, \cS^{0:\tau-1} )$
under the weight function $w$, and similarly for $U( \cS^{\tau+1})$. That is
\begin{align}
w( \cS^{\tau+1}) \stackrel{\Delta}{=}& \,  \mathrm{Prob} \left[  \cS^{\tau+1} \middle | ( \cS^\tau, \cS^{0:\tau-1} ) ; w \right] \nonumber \\
U( \cS^{\tau+1}) \stackrel{\Delta}{=}& \,  \mathrm{Prob} \left[  \cS^{\tau+1} \middle | ( \cS^\tau, \cS^{0:\tau-1} ) ; U \right],
\end{align}
where the tuple $( \cS^\tau, \cS^{0:\tau-1} ) $ in the conditional probability means that $\cS^\tau$ is the source node set
and nodes in $\cS^{0:\tau-1}$ have been removed, and $w$ and $U$ after the semicolon indicate the weight function.

Then from  Lemma~\ref{lemma:transition}, we have
\begin{align}
 h \left( \cS^\tau, \cS^{0:\tau-1}, U \right) =& \, \sum_{\cS^{\tau+1}} U( \cS^{\tau+1}) h \left( \cS^{\tau+1}, \cS^{0:\tau}, U  \right) \nonumber \\
 h \left( \cS^\tau, \cS^{0:\tau-1}, w \right) =& \, \sum_{\cS^{\tau+1}} w( \cS^{\tau+1}) h \left( \cS^{\tau+1}, \cS^{0:\tau}, w  \right)  \nonumber
\end{align}
where the sum is over all possible realization of $\cS^{\tau+1}$. 

Hence we have
\begin{align}
& \, h \left( \cS^\tau, \cS^{0:\tau-1}, U \right) - h \left( \cS^\tau, \cS^{0:\tau-1}, w \right)  \nonumber \\
= & \,  \sum_{\cS^{\tau+1}} \left[
U( \cS^{\tau+1}) h \left( \cS^{\tau+1}, \cS^{0:\tau}, U  \right) - w( \cS^{\tau+1}) h \left( \cS^{\tau+1}, \cS^{0:\tau}, w  \right) 
\right] \nonumber \\
= & \, \sum_{\cS^{\tau+1}} \left[
U( \cS^{\tau+1}) h \left( \cS^{\tau+1}, \cS^{0:\tau}, U  \right) - w( \cS^{\tau+1}) h \left( \cS^{\tau+1}, \cS^{0:\tau}, U  \right) 
\right]  \nonumber \\
+& \, 
\sum_{\cS^{\tau+1}} \left[
w( \cS^{\tau+1}) h \left( \cS^{\tau+1}, \cS^{0:\tau}, U  \right) - w( \cS^{\tau+1}) h \left( \cS^{\tau+1}, \cS^{0:\tau}, w  \right) 
\right]  \nonumber \\
= & \, \sum_{\cS^{\tau+1}} \left[
U( \cS^{\tau+1}) - w( \cS^{\tau+1}) 
\right] h \left( \cS^{\tau+1}, \cS^{0:\tau}, U  \right)   \nonumber \\
+& \, 
\sum_{\cS^{\tau+1}} w( \cS^{\tau+1})  \left[
h \left( \cS^{\tau+1}, \cS^{0:\tau}, U  \right) - h \left( \cS^{\tau+1}, \cS^{0:\tau}, w  \right) 
\right],
\end{align}
where the sum in the above equations is also over all the possible realizations of $\cS^{\tau+1}$.
Notice that by definition, we have
\begin{multline}
\E \left[ h \left( \cS^{\tau+1}, \cS^{0:\tau}, U  \right) -  h \left( \cS^{\tau+1}, \cS^{0:\tau}, w  \right) \middle | ( \cS^\tau, \cS^{0:\tau-1} ) \right] = \\
\sum_{\cS^{\tau+1}} w( \cS^{\tau+1}) \left[
 h \left( \cS^{\tau+1}, \cS^{0:\tau}, U  \right) -  h \left( \cS^{\tau+1}, \cS^{0:\tau}, w  \right) \right] ,
\end{multline}
where the expectation in the lefthand side is over $\cS^{\tau+1}$ under weight $w$, or equivalently, over $\bw(e)$ for all $e \in \cE(\cS^\tau, \cS^{0:\tau})$ under weight $w$.
Thus, to prove Lemma~\ref{lemma:recursive_bound}, it is sufficient to prove that
\begin{equation}
\sum_{\cS^{\tau+1}} \left[
U( \cS^{\tau+1})  - w( \cS^{\tau+1}) 
\right] h \left( \cS^{\tau+1}, \cS^{0:\tau}, U  \right)  \leq \sum_{e \in \cE(\cS^\tau, \cS^{0:\tau})} \left[U(e) - w(e) \right]. 
\end{equation}
Notice that
\begin{align}
& \, \sum_{\cS^{\tau+1}} \left[
U( \cS^{\tau+1})  - w( \cS^{\tau+1}) 
\right] h \left( \cS^{\tau+1}, \cS^{0:\tau}, U  \right) \nonumber \\
\stackrel{(a)}{\leq} & \, \sum_{\cS^{\tau+1}} \left[
U( \cS^{\tau+1})  - w( \cS^{\tau+1}) 
\right] h \left( \cS^{\tau+1}, \cS^{0:\tau}, U  \right) \mathbf{1}\left[ U( \cS^{\tau+1})  \geq w( \cS^{\tau+1}) \right] \nonumber \\
\stackrel{(b)}{\leq} & \, \sum_{\cS^{\tau+1}} \left[
U( \cS^{\tau+1})  - w( \cS^{\tau+1}) 
\right]  \mathbf{1}\left[ U( \cS^{\tau+1})  \geq w( \cS^{\tau+1}) \right] \nonumber \\
\stackrel{(c)}{=} & \, \frac{1}{2} \sum_{\cS^{\tau+1}} \left |
U( \cS^{\tau+1})  - w( \cS^{\tau+1}) 
\right | ,
\end{align}
where (a) holds since
\begin{multline}
\sum_{\cS^{\tau+1}} \left[
U( \cS^{\tau+1})  - w( \cS^{\tau+1}) 
\right] h \left( \cS^{\tau+1}, \cS^{0:\tau}, U  \right) = \\
\sum_{\cS^{\tau+1}} \left[
U( \cS^{\tau+1})  - w( \cS^{\tau+1}) 
\right] h \left( \cS^{\tau+1}, \cS^{0:\tau}, U  \right) \mathbf{1}\left[ U( \cS^{\tau+1})  \geq w( \cS^{\tau+1}) \right]  \\
+
\sum_{\cS^{\tau+1}} \left[
U( \cS^{\tau+1})  - w( \cS^{\tau+1}) 
\right] h \left( \cS^{\tau+1}, \cS^{0:\tau}, U  \right) \mathbf{1}\left[ U( \cS^{\tau+1})  < w( \cS^{\tau+1}) \right], \nonumber
\end{multline}
and the second term on the righthand side is non-positive. And (b) holds since $0 \leq h \left( \cS^{\tau+1}, \cS^{0:\tau}, U  \right) \leq 1 $ by definition.
To prove (c), we define shorthand notations
\begin{align}
A^+=& \sum_{\cS^{\tau+1}} \left[
U( \cS^{\tau+1})  - w( \cS^{\tau+1}) 
\right]  \mathbf{1}\left[ U( \cS^{\tau+1})  \geq w( \cS^{\tau+1}) \right] \nonumber \\
A^-=& \sum_{\cS^{\tau+1}} \left[
U( \cS^{\tau+1})  - w( \cS^{\tau+1}) 
\right]  \mathbf{1}\left[ U( \cS^{\tau+1})  < w( \cS^{\tau+1}) \right] \nonumber
\end{align}
Then we have
\[
A^+ + A^- = \sum_{\cS^{\tau+1}} \left[
U( \cS^{\tau+1})  - w( \cS^{\tau+1}) 
\right] =0,
\]
since by definition $\sum_{\cS^{\tau+1}} U( \cS^{\tau+1}) = \sum_{\cS^{\tau+1}} w( \cS^{\tau+1})=1$. Moreover, we also have
\[
A^+ - A^- = \sum_{\cS^{\tau+1}} \left |
U( \cS^{\tau+1})  - w( \cS^{\tau+1}) 
\right | .
\]
And hence $A^+ = \frac{1}{2} \sum_{\cS^{\tau+1}} \left |
U( \cS^{\tau+1})  - w( \cS^{\tau+1}) 
\right | $. Thus, to prove Lemma~\ref{lemma:recursive_bound}, it is sufficient to prove
\begin{align}
\frac{1}{2} \sum_{\cS^{\tau+1}} \left |
U( \cS^{\tau+1})  - w( \cS^{\tau+1}) 
\right | \leq \sum_{e \in \cE(\cS^\tau, \cS^{0:\tau})} \left[U(e) - w(e) \right].
\end{align}
Let $\tilde{\bw} \in \{0, 1\}^{|\cE(\cS^\tau, \cS^{0:\tau})|}$ be an arbitrary edge activation realization for edges in $\cE(\cS^\tau, \cS^{0:\tau})$.
Also with a little bit abuse of notation, we use $w(\tilde{\bw})$ to denote the probability of $\tilde{\bw}$ under weight $w$. Notice that
\[
w(\tilde{\bw}) = \prod_{e \in \cE(\cS^\tau, \cS^{0:\tau})} w(e)^{\tilde{\bw}(e)} \left[ 1- w(e)\right]^{1 - \tilde{\bw}(e)},
\]
and $U(\tilde{\bw})$ is defined similarly. Recall that by definition $\cS^{\tau+1}$ is a deterministic function of source node set $\cS^\tau$, removed nodes
$\cS^{0:\tau-1}$, and $\tilde{\bw}$. Hence, for any possible realized $\cS^{\tau+1}$, let $\mathbf{W}(\cS^{\tau+1})$ denote the set of $\tilde{\bw}$'s that lead to this $\cS^{\tau+1}$, then we have
\[
U( \cS^{\tau+1}) =\sum_{\tilde{\bw} \in \mathbf{W}(\cS^{\tau+1})} U(\tilde{\bw}) \quad \text{ and } \quad 
w( \cS^{\tau+1}) =\sum_{\tilde{\bw} \in \mathbf{W}(\cS^{\tau+1})} w(\tilde{\bw}) 
\]
Thus, we have
\begin{align}
\frac{1}{2} \sum_{\cS^{\tau+1}} \left |
U( \cS^{\tau+1})  - w( \cS^{\tau+1}) 
\right | = & \, \frac{1}{2} \sum_{\cS^{\tau+1}} \left |
\sum_{\tilde{\bw} \in \mathbf{W}(\cS^{\tau+1})} [ U(\tilde{\bw})
-  w(\tilde{\bw})  ]  \right |  \nonumber \\
\leq & \, 
\frac{1}{2} \sum_{\cS^{\tau+1}} 
\sum_{\tilde{\bw} \in \mathbf{W}(\cS^{\tau+1})} \left | U(\tilde{\bw})
-  w(\tilde{\bw})   \right |  \nonumber \\
= & 
\frac{1}{2}  
\sum_{\tilde{\bw} } \left | U(\tilde{\bw})
-  w(\tilde{\bw})   \right |
\end{align}
Finally, we prove that
\begin{equation}
\frac{1}{2}  
\sum_{\tilde{\bw} } \left | U(\tilde{\bw})
-  w(\tilde{\bw})   \right | \leq 
 \sum_{e \in \cE(\cS^\tau, \cS^{0:\tau})} \left[U(e) - w(e) \right]
\end{equation}
by mathematical induction. Without loss of generality, we order the edges in $\cE(\cS^\tau, \cS^{0:\tau})$ as
$1,2,\ldots, |\cE(\cS^\tau, \cS^{0:\tau})|$. For any $k=1, \ldots,  |\cE(\cS^\tau, \cS^{0:\tau})|$, we use $\tilde{\bw}_k \in \{0, 1\}^{k}$ to denote 
an arbitrary edge activation realization for edges $1,\ldots, k$. Then, we prove 
\begin{equation}
\label{eqn:induction}
\frac{1}{2}  
\sum_{\tilde{\bw}_k} \left | U(\tilde{\bw}_k)
-  w(\tilde{\bw}_k)   \right | \leq 
 \sum_{e =1}^k \left[U(e) - w(e) \right]
\end{equation}
for all $k=1, \ldots,  |\cE(\cS^\tau, \cS^{0:\tau})| $ by mathematical induction. Notice that when $k=1$, we have
\[
\frac{1}{2}  
\sum_{\tilde{\bw}_1} \left | U(\tilde{\bw}_1)
-  w(\tilde{\bw}_1)   \right | =\frac{1}{2} \left[
|U(1) - w(1)| +|(1-U(1)) - (1-w(1))| \right]
= U(1)-w(1).
\]
Now assume that the induction hypothesis holds for $k$, we prove that it also holds for 
$k+1$. Note that
\begin{align}
\frac{1}{2}  
\sum_{\tilde{\bw}_{k+1}} \left | U(\tilde{\bw}_{k+1})
-  w(\tilde{\bw}_{k+1})   \right | = & \frac{1}{2}  
\sum_{\tilde{\bw}_{k}} \left[ \left | U(\tilde{\bw}_{k}) U(k+1)
-  w(\tilde{\bw}_{k}) w(k+1)  \right |   \right. \nonumber \\
+& \left. 
\left | U(\tilde{\bw}_{k}) (1-U(k+1))
-  w(\tilde{\bw}_{k}) (1-w(k+1))  \right |
\right] \nonumber \\
\stackrel{(a)}{\leq}  & \, \frac{1}{2}  
\sum_{\tilde{\bw}_{k}}
[  \left | U(\tilde{\bw}_{k}) U(k+1)
-  w(\tilde{\bw}_{k}) U(k+1)  \right | \nonumber \\
+& \,  \left | w(\tilde{\bw}_{k}) U(k+1)
-  w(\tilde{\bw}_{k}) w(k+1)  \right | \nonumber \\
+& \, 
\left | U(\tilde{\bw}_{k}) (1-U(k+1))
-  w(\tilde{\bw}_{k}) (1-U(k+1))  \right | \nonumber \\
+& \, 
\left | w(\tilde{\bw}_{k}) (1-U(k+1))
-  w(\tilde{\bw}_{k}) (1-w(k+1))  \right | 
] \nonumber \\
=& \, 
\frac{1}{2}  
\sum_{\tilde{\bw}_{k}}
[ U(k+1)  \left | U(\tilde{\bw}_{k}) 
-  w(\tilde{\bw}_{k})   \right | 
+ w(\tilde{\bw}_{k})  \left |  U(k+1)
-  w(k+1)  \right | \nonumber \\
+& \, 
(1-U(k+1)) \left | U(\tilde{\bw}_{k}) 
-  w(\tilde{\bw}_{k})  \right | 
+
w(\tilde{\bw}_{k}) \left | U(k+1)
-   w(k+1)  \right | 
] \nonumber \\
=& \, \frac{1}{2}  \sum_{\tilde{\bw}_{k}} \left | U(\tilde{\bw}_{k}) 
-  w(\tilde{\bw}_{k})  \right |  + \left [  U(k+1) - w(k+1)\right ] \nonumber \\
\stackrel{(b)}{\leq} & \,  \sum_{e =1}^k \left[U(e) - w(e) \right] + \left [  U(k+1) - w(k+1)\right ]  \nonumber \\
=& \,  \sum_{e =1}^{k+1} \left[U(e) - w(e) \right] ,
\end{align}
where (a) follows from the triangular inequality and (b) follows from the induction hypothesis. Hence,
we have proved Equation~\ref{eqn:induction} by induction hypothesis. As we have proved above, this is sufficient to prove Lemma~\ref{lemma:recursive_bound}.
\end{proof}

Finally, we prove the following lemma:
\begin{lemma}
\label{lemma:cumulative_bound}
For any two weight functions $w, U: \cE \rightarrow [0,1]$ s.t. $U(e) \geq w(e)$ for all $e \in \cE$, we have
\[
\textstyle f(\cS_t, U, v) -f(\cS_t, w, v) \leq 
\E \left[ 
\sum_{\tau=0}^{\tilde{\tau}-1} \sum_{e \in \cE(\cS^\tau, \cS^{0:\tau})} \left[U(e) - w(e) \right]  \middle | \cS_t
\right], 
\]
where $\tilde{\tau}$ is the stopping time when $\cS^\tau = \emptyset$ or $v \in \cS^\tau$, and the expectation is under the weight 
function $w$.
\end{lemma}
\begin{proof}
Recall that the diffusion process $\left( \cS^{\tau} \right )_{\tau=0}^{\tilde{\tau}}$ is a stochastic process. 
Note that by definition, if we treat the pair $(\cS^{\tau}, \cS^{0:\tau-1})$ as the \emph{state} of the diffusion process at diffusion step $\tau$, and assume
that $\bw(e) \sim \mathrm{Bern} \left( w(e) \right)$ are independently sampled for all $e \in \cE_{\cS_t, v}$, 
then the sequence $(\cS^{0}, \cS^{0:-1}), (\cS^{0}, \cS^{0:-1}), \ldots, (\cS^{\tilde{\tau}}, \cS^{0:\tilde{\tau}-1})$ follows a Markov chain, specifically,
\begin{itemize}
\item For any state $(\cS^{\tau}, \cS^{0:\tau-1})$ s.t. $v \notin \cS^{\tau}$ and $\cS^{\tau} \neq \emptyset$, its transition probabilities to the next state $(\cS^{\tau+1}, \cS^{0:\tau})$ depend on $w(e)$'s for $e \in \cE \left( \cS^{\tau} , \cS^{0:\tau} \right)$.
\item Any state $(\cS^{\tau}, \cS^{0:\tau-1})$ s.t. $v \in \cS^{\tau}$ or $\cS^{\tau} = \emptyset$ is a terminal state and the state transition terminates once visiting such a state. Recall that by definition of the stopping time $\tilde{\tau}$, the state transition terminates at $\tilde{\tau}$.
\end{itemize}

We define $h \left( \cS^\tau, \cS^{0:\tau-1}, U \right) - h \left( \cS^\tau, \cS^{0:\tau-1}, w \right) $ as the ``value" at state $(\cS^{\tau}, \cS^{0:\tau-1})$.
Also note that the states in this Markov chain is \emph{topologically sortable} in the sense that it will never revisit a state it visits before. Hence, we can compute 
$h \left( \cS^\tau, \cS^{0:\tau-1}, U \right) - h \left( \cS^\tau, \cS^{0:\tau-1}, w \right) $ via a backward induction from the terminal states, based on a valid topological order. 
Thus, from Lemma~\ref{lemma:recursive_bound}, we have
\begin{align}
f(\cS_t, U, v) -f(\cS_t, w, v) \stackrel{(a)}{=}& \, h(\cS^0, \emptyset, U)- h(\cS^0, \emptyset, w) \nonumber \\
\stackrel{(b)}{\leq}& \, \E \left[ 
\sum_{\tau=0}^{\tilde{\tau}-1} \sum_{e \in \cE(\cS^\tau, \cS^{0:\tau})} \left[U(e) - w(e) \right]  \middle | \cS^0
\right],
\end{align} 
where $(a)$ follows from the definition of $h$, and (b) follows from the backward induction. Since $\cS^0=\cS_t$ by definition, we have proved Lemma~\ref{lemma:cumulative_bound}.
\end{proof}

Finally, we prove Theorem~\ref{theorem:graph} based on Lemma~\ref{lemma:cumulative_bound}. Recall that the favorable event at round $t-1$
is defined as
\[
\xi_{t-1}= \left \{
|x_{e}\transpose (\bar{\theta}_{\tau-1} - \theta^*)| \leq c \sqrt{x_{e}\transpose \bM_{\tau-1}^{-1} x_{e}}, \, \forall e \in \cE, \, \forall \tau \leq t
\right \}.
\]
Also, based on Algorithm \ref{alg:imb},  we have $$0 \leq \bar{w}(e) \leq U_t(e) \leq 1,\forall e \in \cE. $$
Thus, from Lemma~\ref{lemma:cumulative_bound}, we have
\[
\textstyle f(\cS_t, U_t, v) -f(\cS_t, \bar{w}, v) \leq 
\E \left[ 
\sum_{\tau=0}^{\tilde{\tau}-1} \sum_{e \in \cE(\cS^\tau, \cS^{0:\tau})} \left[U_t(e) - \bar{w}(e) \right]  \middle | \cS_t, \cH_{t-1}
\right], 
\]
where the expectation is based on the weight function $\bar{w}$. 
Recall that $O_{t}(e)$ is the event that edge $e$ is observed at round $t$.
Recall that by definition, all edges in $\cE(\cS^\tau, \cS^{0:\tau})$ are observed at round $t$ (since they are going out from an influenced node in $\cS^\tau$, see Definition~\ref{def:O_t}) 
and belong to $\cE_{\cS_t, v}$, so we have
\begin{align}
\textstyle f(\cS_t, U_t, v) -f(\cS_t, \bar{w}, v) &\leq  \,  
\E \left[ 
\sum_{\tau=0}^{\tilde{\tau}-1} \sum_{e \in \cE(\cS^\tau, \cS^{0:\tau})} \left[U_t(e) - \bar{w}(e) \right]  \middle | \cS_t, \cH_{t-1}
\right] \nonumber \\
&\leq  
\E \left[ 
 \sum_{e \in \cE_{\cS_t, v}} \mathbf{1}\left(O_t(e)\right)\left[U_t(e) - \bar{w}(e) \right]  \middle | \cS_t, \cH_{t-1}
\right].
\end{align}
This completes the proof for Theorem~\ref{theorem:graph}.

\subsection{Proof of Lemma~\ref{lemma:worst}}
\label{sec:proof_lemma2}
\begin{proof}
To simplify the exposition, we define $z_{t,e}=\sqrt{x_{e}\transpose \bM_{t-1}^{-1} x_{e}}$ 
for all $t=1,2\ldots, n$ and all $e \in \cE$, and use $\cE_t^o$ denote the set of edges observed at round $t$.
Recall that
\begin{align}
\bM_t = \bM_{t-1} + \frac{1}{\sigma^2} \sum_{e \in \cE} x_{e}  x_{e}\transpose \mathbf{1} \left \{ O_t(e) \right \} =
\bM_{t-1} + \frac{1}{\sigma^2} \sum_{e \in \cE_t^o} x_{e}  x_{e}\transpose \label{eqn:M_update}.
\end{align}
Thus, for all $(t,e)$ such that $e \in \cE_t^o$ (i.e., edge $e$ is observed at round $t$), we have that
\begin{align}
\det \left[\bM_t \right] \geq & \det \left[ \bM_{t-1} + \frac{1}{\sigma^2} x_{e}  x_{e}\transpose \right]
= \det \left[ \bM_{t-1}^{\frac{1}{2}} \left ( \bI  + \frac{1}{\sigma^2} \bM_{t-1}^{- \frac{1}{2}} x_{e}  x_{e}\transpose  \bM_{t-1}^{- \frac{1}{2}} \right) \bM_{t-1}^{\frac{1}{2}}  \right] \nonumber \\
= & \det \left[ \bM_{t-1} \right] \det \left[\bI  + \frac{1}{\sigma^2} \bM_{t-1}^{- \frac{1}{2}} x_{e}  x_{e}\transpose  \bM_{t-1}^{- \frac{1}{2}} \right] \nonumber \\
= &\det \left[ \bM_{t-1} \right] \left( 1 + \frac{1}{\sigma^2}  x_{e}\transpose \bM_{t-1}^{- 1}  x_{e} \right)
= \det \left[ \bM_{t-1} \right]   \left( 1 + \frac{z_{t,e}^2}{\sigma^2}   \right). \nonumber
\end{align}
Thus, we have
\[
\left( \det \left[\bM_t \right] \right)^{|\cE_t^o|} \geq \left( \det \left[ \bM_{t-1} \right] \right)^{|\cE_t^o|}  \prod_{e \in \cE_t^o} \left( 1 + \frac{z_{t,e}^2}{\sigma^2}   \right).
\]
\begin{remark}
Notice that when the feature matrix $\bX=\bI$, $\bM_t$'s are always diagonal matrices, and we have
\[
 \det \left[\bM_t \right]  =  \det \left[ \bM_{t-1} \right]  \prod_{e \in \cE_t^o} \left( 1 + \frac{z_{t,e}^2}{\sigma^2}   \right),
\]
which will lead to a tighter bound in the tabular ($\bX=\bI$) case.
\end{remark}
Since 1) $\det \left[ \bM_t \right] \geq \det \left[ \bM_{t-1}  \right]$ from Equation~\ref{eqn:M_update} and 2) $|\cE_t^o| \leq \mre$,  where $\mre$ is defined in Equation~\ref{eqn:mre_def} and $|\cE_t^o| \leq \mre$ follows from its definition,  
we have
\[
\left( \det \left[\bM_t \right] \right)^{\mre} \geq \left( \det \left[ \bM_{t-1} \right] \right)^{\mre}  \prod_{e \in \cE_t^o} \left( 1 + \frac{z_{t,e}^2}{\sigma^2}   \right).
\]
Therefore, we have
\[
\left( \det \left[\bM_n \right] \right)^{\mre} \geq \left( \det \left[ \bM_{0} \right] \right)^{\mre} \prod_{t=1}^n \prod_{e \in \cE^o_t} \left( 1 + \frac{z_{t,e}^2}{\sigma^2}   \right) 
=\prod_{t=1}^n \prod_{e \in \cE^o_t} \left( 1 + \frac{z_{t,e}^2}{\sigma^2}   \right) ,
 \]
 since $\bM_0=\bI$. On the other hand, we have that
 \[
 \tr\left( \bM_n \right)=\tr \left(  \bI+ \frac{1}{\sigma^2} \sum_{t=1}^n \sum_{e \in \cE_t^o} x_{e}  x_{e}\transpose \right)
 = d +  \frac{1}{\sigma^2} \sum_{t=1}^n \sum_{e \in \cE_t^o} \| x_{e} \|_2^2 \leq d + \frac{n \mre}{\sigma^2}\CommaBin
 \]
 where the last inequality follows from the fact that $\| x_{e} \|_2 \leq 1$ and $ |\cE_t^o| \leq \mre$.
 From the trace-determinant inequality, we have $\frac{1}{d}  \tr\left( \bM_n \right) \geq \left[ \det(\bM_n)\right]^{\frac{1}{d}}$,
 thus we have
 \[
 \left[
 1+  \frac{n \mre}{d \sigma^2}
 \right]^{d \mre}
 \geq 
 \left [
 \frac{1}{d}  \tr\left( \bM_n \right) 
 \right ]^{d \mre}
 \geq
 \left[ \det( \bM_n)\right]^{\mre}
 \geq
  \prod_{t=1}^n \prod_{e \in \cE_t^o} \left( 1 + \frac{z_{t,e}^2}{\sigma^2}   \right).
 \]
Taking the logarithm on the both sides, we have
\begin{equation}
d \mre \log 
 \left[
 1+  \frac{n \mre}{d \sigma^2}
 \right]
 \geq
 \sum_{t=1}^n \sum_{e \in \cE_t^o} \log \left( 1 + \frac{z_{t,e}^2}{\sigma^2}   \right).
\end{equation}
Notice that $z_{t,e}^2=x_{e}\transpose \bM_{t-1}^{-1} x_{e} \leq x_{e}\transpose \bM_{0}^{-1} x_{e}
= \| x_{e} \|^2_2  \leq 1$, thus we have 
$z_{t,e}^2 \leq \frac{\log \left( 1 + \frac{z_{t,e}^2}{\sigma^2}   \right)}{\log \left( 1 + \frac{1}{\sigma^2}   \right)}\cdot$ \footnote{Notice that for any
$y \in [0,1]$, we have $y \leq \frac{\log \left( 1 + \frac{y}{\sigma^2}   \right)}{\log \left( 1 + \frac{1}{\sigma^2}   \right)} \stackrel{\Delta}{=} \kappa(y)$. 
To see it, notice that $\kappa(y)$ is a strictly concave function, and $\kappa (0)=0$ and $\kappa (1)=1$.}
Hence we have
\begin{align}
\sum_{t=1}^{n} \sum_{e \in \cE_t^o} z_{t,e}^2 
\leq \frac{1}{\log \left( 1 + \frac{1}{\sigma^2}   \right)}  \sum_{t=1}^n \sum_{e \in \cE_t^o} \log \left( 1 + \frac{z_{t,e}^2}{\sigma^2}   \right)
\leq \frac{d \mre \log 
 \left[
 1+  \frac{n \mre}{d \sigma^2}
 \right]}{\log \left( 1 + \frac{1}{\sigma^2}   \right)}.  \label{eqn:bound_z_1}
\end{align}
\begin{remark}
When the feature matrix $\bX=\bI$, we have $d = |\cE|$, 
\[
\det \left[\bM_n \right] 
= \prod_{t=1}^n \prod_{e \in \cE^o_t} \left( 1 + \frac{z_{t,e}^2}{\sigma^2}   \right),
\text{ and } \quad
|\cE| \log 
 \left[
 1+  \frac{n \mre}{ |\cE| \sigma^2}
 \right]
 \geq
 \sum_{t=1}^n \sum_{e \in \cE_t^o} \log \left( 1 + \frac{z_{t,e}^2}{\sigma^2}   \right).
\]
This implies that
\begin{align}
\sum_{t=1}^{n} \sum_{e \in \cE_t^o} z_{t,e}^2 
\leq \frac{|\cE| \log 
 \left[
 1+  \frac{n }{\sigma^2}
 \right]}{\log \left( 1 + \frac{1}{\sigma^2}   \right)}\CommaBin  \label{eqn:bound_z_2}
\end{align}
since $\mre \leq |\cE|$.
\end{remark}
Finally, from Cauchy-Schwarz inequality, we have that
\begin{align}
\sum_{t=1}^n \sum_{e \in \cE} 
 \mathbf{1} \{ O_{t}(e) \}  N_{\cS_t, e} \sqrt{x_{e}\transpose \bM_{t-1}^{-1} x_{e}} = & \,
 \sum_{t=1}^n \sum_{e \in \cE_t^o} 
N_{\cS_t, e} z_{t,e} \nonumber \\
\leq & \,
\sqrt{\left( \sum_{t=1}^n \sum_{e \in \cE_t^o} N^2_{\cS_t, e}  \right) \left( \sum_{t=1}^n \sum_{e \in \cE_t^o} z^2_{t,e}  \right)} \nonumber \\
=& \, \sqrt{\left( \sum_{t=1}^n \sum_{e \in \cE} \mathbf{1} \left \{ O_t(e) \right \} N^2_{\cS_t, e}  \right) \left( \sum_{t=1}^n \sum_{e \in \cE_t^o} z^2_{t,e}  \right)} .
\end{align}
Combining this inequality with the above bounds on $\sum_{t=1}^n \sum_{e \in \cE_t^o} z^2_{t,e}$ (see Equations \ref{eqn:bound_z_1} and \ref{eqn:bound_z_2}), we obtain the statement of the lemma. 
\end{proof}

\subsection{Proof of Lemma~\ref{lemma:concentration}}
\label{sec:proof_lemma3}
\begin{proof}
We use  $\cE_t^o$ denote the set of edges observed at round $t$.
The first observation is that we can order edges in $\cE_t^o$ based on breadth-first search (BFS) from the source nodes $\cS_t$, as described in Algorithm~\ref{alg:edge_sort}, where $\pi_t(\cS_t)$ is an arbitrary conditionally deterministic order of $\cS_t$.
We say a node $u \in \cV$ is a \emph{downstream neighbor} of node $v \in \cV$ if there is a directed edge $(v, u)$.
We also assume that there is a fixed order of downstream neighbors for any node $v \in \cV$.

\newcommand{\nqueue}{\mathrm{queueN}}
\newcommand{\equeue}{\mathrm{queueE}}
\newcommand{\ndict}{\mathrm{dictN}}
\newcommand{\edict}{\mathrm{dictE}}
\newcommand{\dequeue}{\mathrm{dequeue}()}
\newcommand{\enqueue}[1]{\mathrm{enqueue}(#1)}

\begin{algorithm}
\caption{Breadth-First Sort of Observed Edges}
\label{alg:edge_sort}
\begin{algorithmic}
\STATE \textbf{Input:} graph $\cG$, $\pi_t(\cS_t)$, and $\bw_t$
\vspace{0.15cm}
\STATE \textbf{Initialization:} node queue $\nqueue \leftarrow \pi_t (\cS_t)$, edge queue $\equeue \leftarrow \emptyset$, dictionary of influenced nodes $\ndict \leftarrow \cS_t$
\vspace{0.15cm}
\WHILE{$\nqueue$ is not empty}
\STATE node $v \leftarrow \nqueue.\dequeue$
\FOR{all downstream neighbor $u$ of $v$}
\STATE $\equeue.\enqueue{(v,u)}$
\IF{$\bw_t(v,u)==1$ and $u \notin \ndict$}
\STATE $\nqueue.\enqueue{u}$ and $\ndict \leftarrow \ndict \cup \{ u\}$
\ENDIF
\ENDFOR
\ENDWHILE
\vspace{0.15cm}
\STATE \textbf{Output:} edge queue $\equeue$
\end{algorithmic}
\end{algorithm}

Let $J_t=|\cE_t^o|$.
Based on Algorithm~\ref{alg:edge_sort}, 
we order the observed edges in $\cE_t^o$ as
$a^t_1, a^t_2, \ldots, a^t_{J_t}$.
We start by defining some useful notation. For any $t=1,2,\dots$, any $j=1,2, \dots, J_t$, we define
\[
\rnd{\eta}_{t,j}=\bw_t (a^t_j) - \bar{w} (a^t_j).
\]
One key observation is that $\rnd{\eta}_{t,j}$'s form a martingale difference sequence (MDS).\footnote{Notice that the notion of
``time" (or a round) is indexed by the pair $(t,j)$, and follows the lexicographical order. Based on Algorithm~\ref{alg:edge_sort},
at the beginning of round $(t,j)$, $a^t_j$ is conditionally deterministic and the conditional mean of 
$\bw_t (a^t_j)$ is $\bar{w} (a^t_j)$.
} Moreover,  $\rnd{\eta}_{t,j}$'s are bounded in $[-1,1]$
and hence they are conditionally sub-Gaussian with constant $R=1$. We further define that
\begin{align}
\bV_t = &  \sigma^2 \bM_t = \sigma^2 \bI + \sum_{\tau=1}^t \sum_{j=1}^{J_{\tau}} x_{a^{\tau}_j} x_{a^{\tau}_j}\transpose , \text{\ and}
\nonumber \\
Y_t = &  \sum_{\tau=1}^t \sum_{j=1}^{J_{\tau}} x_{a^{\tau}_j} \rnd{\eta}_{t,j}
= B_t - \sum_{\tau=1}^t \sum_{j=1}^{J_{\tau}} x_{a^{\tau}_j} \bar{w} (a^t_j)
=  B_t  - \left[  \sum_{\tau=1}^t \sum_{j=1}^{J_{\tau}} x_{a^{\tau}_j} x_{a^{\tau}_j}\transpose \right] \theta^*.
\nonumber
\end{align}
As we will see later, we define $\bV_t $ and $Y_t$ to use the  self-normalized bound developed in 
\cite{abbasi2011improved} (see Algorithm 1 of \cite{abbasi2011improved}). Notice that
\[
\bM_t \bar{\theta}_t = \frac{1}{\sigma^2} B_t =  \frac{1}{\sigma^2} Y_t + \frac{1}{\sigma^2}  \left[  \sum_{\tau=1}^t \sum_{j=1}^{J_{\tau}} x_{a^{\tau}_j} x_{a^{\tau}_j}\transpose \right] \theta^*
 = \frac{1}{\sigma^2} Y_t +  \left[ \bM_t -\bI \right] \theta^*,
\]
where the last equality is based on the definition of $\bM_t$. Hence we have
\[
\bar{\theta}_t  - \theta^* =  \bM_t^{-1} \left[ \frac{1}{\sigma^2} Y_t -\theta^* \right].
\]
Thus, for any $e \in \cE$, we have
\begin{align}
\left  |  \langle 
x_e, \bar{\theta}_t  - \theta^*
\rangle  \right  |
= & \left |
x_e\transpose
\bM_t^{-1} \left[ \frac{1}{\sigma^2} Y_t -\theta^* \right]
\right | 
\leq \| x_e \|_{\bM_t^{-1}} 
\|
\frac{1}{\sigma^2} Y_t -\theta^*
\|_{\bM_t^{-1}} \nonumber \\
 \leq &
\| x_e \|_{\bM_t^{-1}}  \left[
\|
\frac{1}{\sigma^2} Y_t \|_{\bM_t^{-1}} + \|\theta^*
\|_{\bM_t^{-1}}
\right]
 , \nonumber
\end{align}
where the first inequality follows from the Cauchy-Schwarz inequality and the second inequality follows from the triangle
inequality. Notice that $\|\theta^*
\|_{\bM_t^{-1}} \leq \|\theta^*
\|_{\bM_0^{-1}} = \|\theta^*
\|_2$, and 
$\|
\frac{1}{\sigma^2} Y_t \|_{\bM_t^{-1}}  = \frac{1}{\sigma} \| Y_t \|_{\bV_t^{-1}} $ (since $\bM_t^{-1}= \sigma^2 \bV_t^{-1}$), therefore we have
\begin{equation}
\left  |  \langle 
x_e, \bar{\theta}_t  - \theta^*
\rangle  \right  | \leq
\| x_e \|_{\bM_t^{-1}}  \left[
\frac{1}{\sigma} \| Y_t \|_{\bV_t^{-1}} + \|\theta^*
\|_2
\right ].
\label{eq:bound2}
\end{equation}
Notice that the above inequality always holds. We now provide a high-probability bound on $\| Y_t \|_{\bV_t^{-1}} $
based on self-normalized bound proved in \cite{abbasi2011improved}.
From Theorem 1 of \cite{abbasi2011improved}, we know that for any $\delta \in (0,1)$, with probability at least $1-\delta$, 
we have
\[
\| Y_t \|_{\bV_t^{-1}}  \leq
\sqrt{
2 \log \left(
\frac{\det(\bV_t)^{1/2} \det(\bV_0)^{-1/2}}
{\delta}
\right)
} \quad
\forall t=0,1,\dots \ .
\]
Notice that $ \det(\bV_0)= \det(\sigma^2 \bI)=\sigma^{2d}$. Moreover, from the trace-determinant inequality, we have
\[
\left[ \det (\bV_t) \right]^{1/d} \leq \frac{\tr \left ( \bV_t \right )}{d} = \sigma^2 + \frac{1}{d} \sum_{\tau=1}^t \sum_{j=1}^{J_{\tau}}
\| x_{a^\tau_j}\|_2^2 \leq \sigma^2 + \frac{t \mre }{d} \leq  \sigma^2 + \frac{n \mre}{d},
\]
where the second inequality follows from the assumption that $\| x_{\rnd{a}^t_k}\|_2 \leq 1$ and the fact $ J_t = |\cE^o_t| \leq \mre$, and the last inequality follows from $t \leq n$. Thus, with probability at least $1-\delta$, we have
\[
\| Y_t \|_{\bV_t^{-1}}  \leq \sqrt{d \log \left( 1 + \frac{n \mre}{d \sigma^2} \right)+ 2 \log \left (\frac{1}{\delta} \right) } \quad
\forall t=0,1,\dots, n-1.
\]
That is, with probability at least $1-\delta$, we have
\begin{equation}
\left  |  \langle 
x_e, \bar{\theta}_t  - \theta^*
\rangle  \right  | \leq
\| x_e \|_{\bM_t^{-1}}  \left[
\frac{1}{\sigma}  \sqrt{d \log \left( 1 + \frac{n \mre}{d \sigma^2} \right)+ 2 \log \left (\frac{1}{\delta} \right) } + \|\theta^*
\|_2
\right ] \nonumber
\end{equation}
for all $t=0,1,\dots, n-1$ and $\forall e \in E$. 


Recall that by the definition of event $\xi_{t-1}$, the above inequality implies that, for any $t=1,2,\ldots, n$, if
\[
c \geq \frac{1}{\sigma}  \sqrt{d \log \left( 1 + \frac{n \mre}{d \sigma^2} \right)+ 2 \log \left (\frac{1}{\delta} \right) } + \|\theta^*
\|_2,
\]
then $P (\xi_{t-1}) \geq 1-\delta$. That is, $P (\bar{\xi}_{t-1}) \leq \delta$.
\end{proof}

\end{document}